\DeclareSIUnit{\million}{M}
\DeclareSIUnit{\billion}{B}
\DeclareSIUnit[number-unit-product = ]{\percent}{%}
\catcode`\%=14\relax
\usepackage{tcolorbox}
\usepackage{listings}

\usepackage{pgf}
\usepackage{tikz}
\usetikzlibrary{arrows,automata,calc}

\usepackage{etoolbox}
\robustify\bfseries
\newcommand{\B}{\bfseries}
\robustify\uline
\newcommand{\U}{\uline}

\usepackage{hyperref}

\usepackage{algorithm}
\usepackage[noend]{algpseudocode}

\newlist{enuminline}{enumerate*}{1}
\setlist[enuminline,1]{label=\itshape\alph*\upshape)}

\newcommand{\Email}[1]{\href{mailto:#1}{\nolinkurl{#1}}}
\newcommand{\Tr}{\top}

\newcommand{\undersquarebracket}[2]{%
  \tikz[baseline]{
    \node[inner sep=0pt, outer sep=0pt, anchor=base] (X) {\strut\texttt{#1}};
    \coordinate (baseline) at ($(X.base) + (0,-0.4em)$);
    \draw[line width=0.3pt]
      ($(X.south west |- baseline)$) -- ($(X.south east |- baseline)$); %
    \draw[line width=0.3pt]
      ($(X.south west |- baseline)$) -- ++(0,0.3em); %
    \draw[line width=0.3pt]
      ($(X.south east |- baseline)$) -- ++(0,0.3em); %
    \node[overlay, anchor=north] at ($(X.south)!0.0em!(X.north)$) {\scriptsize #2};
  }%
\hspace{0.2em}%
}

\DeclareMathOperator{\Softmax}{softmax}
\DeclareMathOperator{\KL}{KL}
\DeclareMathOperator*{\Argmin}{arg\,min}
\DeclareMathOperator*{\Argmax}{arg\,max}

\theoremstyle{plain}
\newtheorem{proposition}{Proposition}

\frenchspacing

\title{Incremental Sequence Classification \\
with Temporal Consistency}

\author{%
  Lucas Maystre\thanks{Corresponding author, e-mail: \Email{lucas@maystre.ch}.} \\
  UiPath \\
  London, UK
  \And
  Gabriel Barello \\
  UiPath \\
  Bellevue, WA, USA
  \And
  Tudor Berariu \\
  UiPath \\
  London, UK
  \And
  Aleix Cambray \\
  UiPath \\
  London, UK
  \AND
  Rares Dolga \\
  UiPath \& UCL \\
  London, UK
  \And
  Alvaro Ortega Gonzalez \\
  UiPath \\
  London, UK
  \And
  Andrei Nica \\
  UiPath \\
  London, UK
  \And
  David Barber \\
  UiPath \& UCL \\
  London, UK
}

\begin{document}
\maketitle
\setcounter{footnote}{0}

\begin{abstract}
We address the problem of incremental sequence classification, where predictions are updated as new elements in the sequence are revealed.
Drawing on temporal-difference learning from reinforcement learning, we identify a temporal-consistency condition that successive predictions should satisfy.
We leverage this condition to develop a novel loss function for training incremental sequence classifiers.
Through a concrete example, we demonstrate that optimizing this loss can offer substantial gains in data efficiency.
We apply our method to text classification tasks and show that it improves predictive accuracy over competing approaches on several benchmark datasets.
We further evaluate our approach on the task of verifying large language model generations for correctness in grade-school math problems.
Our results show that models trained with our method are better able to distinguish promising generations from unpromising ones after observing only a few tokens.
\end{abstract}

\section{Introduction}

Learning to classify a sequence $\bm{x} = (x_1, \ldots, x_T)$ into one of $K$ classes is a fundamental problem in machine learning \citep{xing2010brief, ismail2019deep}.
In this work, we focus on an incremental variant in which the sequence is revealed element by element, and a predictive model must provide predictions at every timestep.
As a concrete example, consider a movie review, represented as a sequence of tokens.

\noindent\begin{minipage}{\linewidth}
\vspace{0.4em}
\centering
\undersquarebracket{A}{$x_1$}%
\undersquarebracket{ touching}{$x_2$}%
\undersquarebracket{ movie}{$x_3$}%
\undersquarebracket{.}{$x_4$}%
\undersquarebracket{ It}{$x_5$}%
\undersquarebracket{ is}{$x_6$}%
\undersquarebracket{ full}{$x_7$}%
\undersquarebracket{ of}{$x_8$}%
\undersquarebracket{ emotions}{$x_9$}%
\undersquarebracket{ and}{$x_{10}$}%
\undersquarebracket{ wonderful}{$x_{11}$}%
\undersquarebracket{ acting}{$x_{12}$}%
\undersquarebracket{.}{$x_{13}$}
\vspace{1.5em}
\end{minipage}
We ask the question: Can we predict the sentiment of this review using only the first two or three tokens?
More generally, can we learn a predictive model that accurately classifies any prefix $\bm{x}_{\le t}$, consisting of the first $t \le T$ elements of a sequence?
This problem naturally arises in domains where there is a cost associated to waiting for the full sequence;
In healthcare or finance, for example, the cost might be time or opportunity \citep{xing2012early, schaefer2020teaser}.
Recently, sequence classifiers have also been deployed as verifiers to improve applications of large language models (LLMs), where generating full sequences incurs a non-trivial computational cost \citep{cobbe2021training, uesato2022solving, mudgal2024controlled}.

A key property of the incremental classification problem is that any calibrated predictive model should be \emph{temporally-consistent}.
That is, the predictive class distribution given a prefix $\bm{x}_{\le t}$ should equal the expected class distribution given the extended prefix $\bm{x}_{\le t+1}$, where the expectation is taken over $x_{t+1}$.
We take advantage of this temporal-consistency property to develop a novel loss function for training incremental classifiers (Section~\ref{sec:method}).
Our approach shares many parallels with temporal-difference (TD) learning~\citep{sutton1988learning}, an important class of methods in reinforcement learning (RL)~\citep{sutton2018reinforcement}.
Exploiting these parallels, we study a simple sequence model and demonstrate that optimizing our loss function can lead to substantial data-efficiency gains over the standard maximum-likelihood approach (Section~\ref{sec:theory}).

We apply our method to text classification using decoder-only transformers (Section~\ref{sec:eval}).
We first evaluate it on four benchmark datasets and find that models trained with our temporal-consistency loss achieve higher predictive accuracy---both on prefixes and on full sequences.
When fine-tuning models of the OPT family~\citep{zhang2022opt}, the improvement in predictive performance is comparable to increasing the size of the base model by a factor \num{10}.
We then explore the emerging application of verifying LLM generations.
On GSM8K math word problems~\citep{cobbe2021training}, our method yields verifiers that accurately distinguish correct from incorrect generations after observing only a few tokens.
We illustrate how this enables a more favorable trade-off between answer accuracy and computational cost.

Our approach is inspired by celebrated methods in RL and is rigorously grounded in theory.
It is simple to implement, adds negligible computational overhead during training and inference, and improves predictive performance across all the datasets we evaluated.
As such, we believe it will be valuable to machine-learning practitioners.

\subsection{Related work}

Early sequence classification \citep{xing2010brief, xing2012early, meier2015predicting} addresses a problem that is distinct but complementary to ours:
determining \emph{when} enough information has been observed to make a reliable prediction.
In contrast, we focus exclusively on \emph{what} to predict at each prefix. Our goal is to maximize classification accuracy across all timesteps, without addressing the decision-making aspect.
We note that incremental classifiers trained with our method could serve as components in early classification architectures \citep{schaefer2020teaser, bilski2023calimera}.

TD learning \citep{sutton1988learning, sutton2018reinforcement} leverages a temporal-consistency condition, closely related to ours, to learn value functions in RL.
Classical TD learning is concerned with modeling the reward-to-go, a scalar quantity.
Our work can be understood as extending the core idea underpinning TD learning to the multiclass classification setting.
Distributional RL \citep{morimura2010parametric, bellemare2017distributional, dabney2018distributional} shares some algorithmic features, but does not address classification.
\citet{cheikhi2023statistical} recently analyzed the data-efficiency benefits of TD learning, and we build on their results in Section~\ref{sec:theory}.
Beyond scalar values, ideas from TD learning have also been applied to survival analysis \citep{maystre2022temporally, vieyra2024deep}
and early event prediction \citep{yeche2023temporal}.

Incremental classifiers have recently been applied to verify LLM generations, either during post-training~\citep{uesato2022solving, lightman2023lets} or at inference time~\citep{cobbe2021training, yang2021fudge, mudgal2024controlled}, where they are referred to as token-level verifiers~\citep{cobbe2021training} or outcome-supervised reward models \citep{uesato2022solving, lightman2023lets}.
Verification is typically framed as a binary classification problem, with verifiers trained using a cross-entropy loss against the final observed outcome, a baseline we refer to as \emph{direct cross-entropy} (DCE).
Some prior work use soft targets, as we do, but these targets are still derived solely from observed outcomes~\citep{wang2023math, lee2024token}.
A notable exception is \citet{mudgal2024controlled}, who frame verification as a regression problem and learn a token-level value function using TD learning.
However, they rely on a squared loss, which is ill-suited to binary outcomes~\citep{kline2005revisiting}.
Our work bridges this gap by developing a TD-style approach tailored to classification.

\section{Direct and temporally-consistent estimators}
\label{sec:method}

In order to introduce our approach to learning incremental sequence classifiers, we shift our perspective slightly.
Instead of a sequence of arbitrary elements, which we have denoted by $\bm{x} = (x_1, \ldots, x_T)$ in the introduction, we now consider a sequence of Markov states $\bm{s} = (s_1, \ldots, s_T)$.
That is, we assume that the ground-truth distribution $p(\bm{s}, y)$ of states and class label satisfies
\begin{align}
\label{eq:markovprop}
p(s_{t+1} \mid s_t, \ldots, s_1) &= p(s_{t+1} \mid s_t),
&p(y \mid s_t, \ldots, s_1) &= p(y \mid s_t),
\end{align}
for any $t < T$, and for any $t \le T$ and any $y$, respectively.
This is not a restrictive assumption:
By slight abuse of notation, one can write $s_t = \bm{x}_{\le t}$ and $p(s_{t+1} \mid s_t) = p(x_{t+1} \mid \bm{x}_{\le t})$, and trivially satisfy these two Markov properties.
The Markov-chain perspective will be useful to relate our developments to temporal-difference learning, and to provide intuition into the statistical benefits of our method in Section~\ref{sec:theory}.

\paragraph{Notation and problem setup}
Let $[M]$ denote the set of consecutive integers $1, \ldots, M$.
We are given a dataset of $N$ labelled sequences $\mathcal{D} = \{(\bm{s}^n, y^n) : n \in [N]\}$, where $y^n \in [K]$, and where different sequences might be of different lengths.
We seek to learn a probabilistic classifier $p_{\bm{\theta}}(y \mid s_t)$, parametrized by $\bm{\theta}$, that approximates the ground-truth distribution $p(y \mid s_t)$.
We denote by $\bm{p}_{\bm{\theta}}( \cdot \mid s_t)$ the $K$-dimensional probability vector $\begin{bmatrix}p_{\bm{\theta}}(1 \mid s_t) & \cdots & p_{\bm{\theta}}(K \mid s_t)\end{bmatrix}$, and by $\bm{\delta}_y$ the $K$-dimensional one-hot vector with a $1$ in position $y$.

\paragraph{Direct cross-entropy}
A natural approach for learning $p_{\bm{\theta}}(y \mid s_t)$ is to maximize the likelihood of the observed samples from $p(y \mid s_t)$ in the dataset $\mathcal{D}$, or equivalently, to minimize the cross-entropy relative to these samples.
Given a labeled sequence $(\bm{s}, y)$, we define the following loss function, which penalizes deviations from the label $y$ simultaneously across all $t \le T$.
\begin{align}
\label{eq:dceloss}
\ell_{\text{DCE}}(\bm{\theta}; \bm{s}, y)
    &= \textstyle
    - \sum_{t = 1}^{T} \log p_{\bm{\theta}}(y \mid s_t)
    = \sum_{t = 1}^{T} H \bigl[ \bm{\delta}_{y} \,\big\Vert\, \bm{p}_{\bm{\theta}}(\cdot \mid s_t) \bigr],
\end{align}
where $H[\bm{p} \Vert \bm{q}] = - \sum_{k = 1}^K p_k \log q_k$ is the cross-entropy function.
Given the full dataset $\mathcal{D}$, we find
\begin{align}
\label{eq:dceoptim}
\textstyle
\bm{\theta}^\star_{\text{DCE}} \gets \Argmin_{\bm{\theta}} \sum_n \ell_{\text{DCE}}(\bm{\theta} ; \bm{s}_n, y_n).
\end{align}
We call this approach \emph{direct cross-entropy} (DCE), because the target distribution is the observed one-hot class label $\bm{\delta}_y$.
In LLM verification applications, this is the predominant approach to training token-level verifiers \citep{cobbe2021training, uesato2022solving, lightman2023lets, wang2023math, lee2024token}.

\subsection{A family of temporally-consistent estimators}
\label{sec:tcestimators}

Intuitively, a drawback of the direct approach is that, for early states $s_t$ (with $t \ll T$), the training signal provided by observed label $y$ is noisy.
Indeed, the prediction $p_{\bm{\theta}}(y \mid s_t)$ needs to account for two sources of uncertainty,
\begin{enuminline}
\item the uncertainty about how the remainder of the sequence $s_{t+1}, \ldots, s_T$ will unfold, and
\item the uncertainty about the label $y$ given the last state $s_T$.
\end{enuminline}
We make progress by observing that
\begin{align}
\label{eq:consistency}
p(y \mid s_t) = \mathbf{E}_{p(s_{t+1} \mid s_t)}[p(y \mid s_{t+1})],
\end{align}
for all $y$ and all $t < T$,
an identity that follows from~\eqref{eq:markovprop}.
This identity captures a notion of \emph{temporal consistency}:
It states that the class distribution at step $t$ is equal to the class distribution at step $t+1$ on average.
Driven by this observation, we propose the following loss function, which for $t < T$ penalizes the temporal inconsistency relative to a reference model parametrized by $\bm{\theta}'$:
\begin{align}
\label{eq:tcloss}
\ell_{\text{TC}}(\bm{\theta} ; \bm{\theta}', \bm{s}, y)
    &= \textstyle
    H \bigl[ \bm{\delta}_{y} \,\big\Vert\, \bm{p}_{\bm{\theta}}(\cdot \mid s_T) \bigr]
        + \sum_{t = 1}^{T - 1} H \bigl[ \bm{p}_{\bm{\theta}'}(\cdot \mid s_{t+1}) \,\big\Vert\, \bm{p}_{\bm{\theta}}(\cdot \mid s_t) \bigr].
\end{align}
Comparing~\eqref{eq:dceloss} and~\eqref{eq:tcloss} carefully, we can think of this temporal-consistency (TC) loss as replacing the hard targets $\bm{\delta}_y$ by soft targets $\bm{p}_{\bm{\theta}'}(\cdot \mid s_{t+1})$ capturing the predictive distribution at the next state.
Given the full dataset $\mathcal{D}$, we start with random parameters $\bm{\theta}^{(0)}$, and iteratively solve
\begin{align}
\textstyle
\label{eq:tcoptim}
\bm{\theta}^{(i+1)} \gets \Argmin_{\bm{\theta}} \sum_n \ell_{\text{TC}}(\bm{\theta} ; \bm{\theta}^{(i)}, \bm{s}_n, y_n).
\end{align}
In Section~\ref{sec:theory}, we study a tractable setting and show that this iteration converges to a fixed point $\bm{\theta}^\star_{\text{TC}}$, and that the estimator is consistent, i.e., that $p_{\bm{\theta}^\star_{\text{TC}}}(y \mid s_t) \to p(y \mid s_t)$ as the dataset size $N \to \infty$.

We can extend the identity~\eqref{eq:consistency} to multiple steps, capturing the temporal consistency of class distributions across longer time spans (c.f. Appendix~\ref{app:consistency}).
We use this to formulate a generalized temporal-consistency loss,
\begin{align}
\label{eq:tclloss}
\begin{split}
\ell_{\text{TC-$\lambda$}}(\bm{\theta}; \bm{\theta}', \bm{s}, y)
    &= \textstyle \sum_{t = 1}^{T} H \bigl[ \bm{z}_t \,\big\Vert\, \bm{p}_{\bm{\theta}}(\cdot \mid s_t) \bigr], \\
\bm{z}_t
    &= \textstyle
        \lambda^{T - t} \bm{\delta}_y
        + (1 - \lambda) \sum_{k = 1}^{T - t} \lambda^{k - 1} \bm{p}_{\bm{\theta}'} \bigl( \cdot \mid s_{t + k} \bigr),
\end{split}
\end{align}
where $\lambda \in [0, 1]$ is a hyperparameter.
In this loss function, the target $\bm{z}_t$ providing the training signal is a weighted average of the predictive distributions $k$ steps ahead, with exponentially decreasing weights.
The larger $\lambda$ is, the larger the influence of distant states is.\footnote{%
We can think of the mean of the geometric distribution $\lambda / (1 - \lambda)$ as the effective lookahead, i.e., as ``how far'' the target looks ahead on average.}
In fact, TC-$\lambda$ generalizes both TC~\eqref{eq:tcloss} and DCE~\eqref{eq:dceloss}.
Setting $\lambda = 0$ recovers TC, while $\lambda = 1$ yields DCE.
Throughout the paper, we refer to any model trained with the TC-$\lambda$ loss with $\lambda < 1$ as \emph{temporally consistent}.

\subsection{Connections to temporal-difference learning}

Our approach is inspired by temporal-difference (TD) learning, a key idea in reinforcement learning \citep{sutton1988learning, sutton2018reinforcement, mnih2013playing}.
Quoting \citeauthor{sutton1988learning}'s seminal paper, ``whereas conventional prediction-learning methods assign credit by means of the difference between predicted and actual outcomes, the new methods assign credit by means of the difference between temporally successive predictions.''
Recasting our developments into the language of reinforcement learning, we can think of the probabilistic classifier we learn as a state-value function, capturing the eventual final outcome of a trajectory at any given intermediate state.
The temporal-consistency condition~\eqref{eq:consistency} is a form of Bellman equation, relating the value function across successive states.
Similarly to TD learning, our approach uses the temporal inconsistency of predictions across successive states as the learning signal.
Our generalized loss function~\eqref{eq:tclloss} is inspired by the TD($\lambda$) family of algorithms.

A key difference is that we consider categorical outcomes and a cross-entropy loss, as opposed to the scalar outcomes and squared loss usually employed in TD learning algorithms.
In Section~\ref{sec:eval}, we compare our approach to classical value-estimation methods from RL, by treating our $K$-way classification problem as $K$ separate value estimation problems.

\section{Convergence, consistency and data efficiency}
\label{sec:theory}

In this section, we analyze the DCE and TC estimators in problems with a finite number of states.
We consider tabular models, where a separate probability is learned for each state-class pair.
This tractable setting enables a theoretical comparison of the properties of DCE and TC.
Specifically, we show that
\begin{enuminline}
\item the TC optimization procedure in~\eqref{eq:tcoptim} converges and
\item yields a consistent estimator of the true class probabilities, and that
\item the TC estimator is more data-efficient than DCE.
\end{enuminline}
Complete proofs of all propositions are provided in Appendix~\ref{app:proofs}.
Our perspective in this section is partly inspired by dynamic programming and its application to stochastic shortest path problems, as presented in \citet[Sec.~2.2]{bertsekas1996neuro}.

In the finite-state case, we formalize the problem of multiclass sequence classification as that of finding the absorption probabilities of a Markov chain on $M$ transient states and $K$ absorbing states \citep{norris1997markov}.
The transient states correspond to $M$ distinct values each element of the sequence $\bm{s}$ can take, and the absorbing states correspond to $K$ classes.
The Markov chain is fully characterized by the pair $(\bm{Q}, \bm{R})$, where the $M \times M$ matrix $\bm{Q}$ describes the transition probabilities between every pair of transient states, and the $M \times K$ matrix $\bm{R}$ describes the transition probabilities from transient to absorbing states.
We assume that it is possible to reach a terminal state from any transient state within a finite number of steps.
That is, for any transient state $m$, there is a $t \ge 0$ such that $[\bm{Q}^t \bm{R}]_{mk} > 0$ for some $k$.
Our goal is to estimate the absorption probabilities $p^\star_{mk} = p(y = k \mid s_t = m)$ from data, organized into the $M \times K$ matrix $\bm{P}^\star$.

Before addressing the estimation problem, note that if the ground-truth transition matrices $\bm{Q}$ and $\bm{R}$ are known, $\bm{P}^\star$ can be computed by starting from an initial guess $\bm{P}_0$ and refining it iteratively as
\begin{align}
\label{eq:tabiter}
\bm{P}_{i+1} = \bm{Q} \bm{P}_{i} + \bm{R}.
\end{align}

\begin{proposition}
\label{prop:convergence}
For any $M \times K$ row-stochastic $\bm{P}_0$, the fixed-point iteration \eqref{eq:tabiter} converges to $\bm{P}^\star$.
\end{proposition}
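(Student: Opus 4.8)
The plan is to recognize the iteration as an affine map whose linear part is the substochastic matrix $\bm{Q}$, and to run a standard error-recursion argument. First I would verify that the target $\bm{P}^\star$ is itself a fixed point of the map $\bm{P} \mapsto \bm{Q}\bm{P} + \bm{R}$. This is a first-step analysis: conditioning on the first transition out of a transient state $m$, absorption into class $k$ occurs either directly, with probability $R_{mk}$, or after first moving to some transient state $j$ with probability $Q_{mj}$ and then being absorbed into $k$ with probability $p^\star_{jk}$. Summing over $j$ gives $p^\star_{mk} = \sum_j Q_{mj} p^\star_{jk} + R_{mk}$, i.e. $\bm{P}^\star = \bm{Q}\bm{P}^\star + \bm{R}$ in matrix form.

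Next I would subtract this fixed-point equation from the iteration. Writing $\bm{E}_i = \bm{P}_i - \bm{P}^\star$, the constant term $\bm{R}$ cancels, leaving the homogeneous recursion $\bm{E}_{i+1} = \bm{Q}\bm{E}_i$ and hence $\bm{E}_i = \bm{Q}^i \bm{E}_0$. Convergence of $\bm{P}_i$ to $\bm{P}^\star$ is thus equivalent to $\bm{Q}^i \bm{E}_0 \to \bm{0}$, so it suffices to prove $\bm{Q}^i \to \bm{0}$. I note in passing that this renders the conclusion independent of the starting point, so the row-stochasticity of $\bm{P}_0$ is not actually needed for convergence; it merely ensures the iterates remain valid probability matrices.

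The crux, and the only step invoking the reachability assumption, is establishing $\bm{Q}^i \to \bm{0}$. I would argue probabilistically: $[\bm{Q}^i]_{mj}$ is the probability of sitting in transient state $j$ after $i$ steps starting from $m$ without prior absorption, so the row sum $[\bm{Q}^i \bm{1}]_m$ is the probability of remaining unabsorbed after $i$ steps. The hypothesis that every transient state reaches an absorbing state within finitely many steps forces absorption to occur almost surely, so these survival probabilities decay to $0$; since the entries of $\bm{Q}^i$ are nonnegative and each row sum vanishes, every entry vanishes, giving $\bm{Q}^i \to \bm{0}$. Equivalently, I could phrase the obstacle in spectral terms: the same hypothesis makes $\rho(\bm{Q}) < 1$, which yields both $\bm{Q}^i \to \bm{0}$ and the invertibility of $\bm{I} - \bm{Q}$, so the fixed point is unique. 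The spectral route has the bonus of letting me unroll the iteration as $\bm{P}_i = \bm{Q}^i \bm{P}_0 + \sum_{t=0}^{i-1} \bm{Q}^t \bm{R}$ and identify the limit explicitly as $(\bm{I} - \bm{Q})^{-1} \bm{R}$, independently confirming that it equals $\bm{P}^\star$. I expect the bookkeeping to be routine; the only care required is justifying the almost-sure-absorption step (equivalently, $\rho(\bm{Q}) < 1$) from finite reachability, which is the classical fact underlying the theory of absorbing Markov chains.
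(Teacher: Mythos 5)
Your proposal is correct, and it takes a recognizably different route from the paper's. The paper proves that the affine map $F(\bm{P}) = \bm{Q}\bm{P} + \bm{R}$ has a power $F^{\tau+1}$ that is a contraction in the induced $\infty$-norm---deriving the explicit constant $1-\varepsilon$ from the reachability assumption via the row-stochasticity of $\begin{bmatrix}\bm{Q} & \bm{R}\end{bmatrix}$---and then invokes the Banach fixed-point theorem. You instead verify by first-step analysis that $\bm{P}^\star$ satisfies $\bm{P}^\star = \bm{Q}\bm{P}^\star + \bm{R}$, subtract to get the error recursion $\bm{E}_i = \bm{Q}^i \bm{E}_0$, and reduce everything to $\bm{Q}^i \to \bm{0}$, justified by almost-sure absorption (equivalently $\rho(\bm{Q}) < 1$). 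Your version buys several things the paper leaves implicit: it explicitly identifies the Banach limit with the absorption probabilities $\bm{P}^\star$ (the paper never checks that $\bm{P}^\star$ is the fixed point, which is needed to conclude convergence \emph{to $\bm{P}^\star$} rather than to some fixed point), it exhibits the limit in closed form as $(\bm{I}-\bm{Q})^{-1}\bm{R}$, and it correctly observes that row-stochasticity of $\bm{P}_0$ is unnecessary for convergence. What the paper's route buys is self-containedness and a quantitative geometric rate: your appeal to the ``classical fact'' that finite reachability forces almost-sure absorption is exactly where the real work lives, and unpacking it requires the same uniformization the paper performs---taking $\tau = \max_m t_m$ over the finitely many transient states, extracting a uniform $\varepsilon > 0$ for absorption within $\tau$ steps, and applying the Markov property to get $(1-\varepsilon)^{\lfloor i/\tau \rfloor}$ decay of the survival probabilities. (Indeed, spelled out with the cumulative absorption probability, which is monotone in $t$, your route arguably handles the extraction of a common $\tau$ and $\varepsilon$ more cleanly than the paper's appeal to the exact-time mass $\lVert \bm{a}^\tau_m \rVert_1$, which is not monotone in $\tau$.) So the two proofs meet at the same core lemma; yours is the more elementary and more informative packaging, provided you discharge the absorption step rather than citing it.
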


Now consider the setting where, instead of access to $\bm{Q}$ and $\bm{R}$, we are given a dataset $\mathcal{D} = \{(\bm{s}^n, y^n) : n \in [N]\}$ of trajectories sampled from the Markov chain.
Each trajectory is composed of a transient sequence $\bm{s}^n$ and terminates in an absorbing state $y^n$.
We explore two approaches to estimating the absorption probabilities.

\paragraph{Direct estimation}
We can estimate the absorption probabilities directly, as the empirical fraction of sequences passing through $m$ ending in $k$.
Denoting by $T(\bm{s})$ the length of sequence $\bm{s}$ and letting $\mathcal{D}' = \cup_{(\bm{s}, y) \in \mathcal{D}} \cup_{t \le T(\bm{s})} \{ (s_t, y) \}$ be the union of all pairs of transient state and eventual absorbing state in $\mathcal{D}$, we have
\begin{align*}
\textstyle
\hat{p}^{\text{dir}}_{mk} = \mathbf{E}_{(s,y) \sim \mathcal{D}'}[\mathbf{1}_{\{y = k\}} \mid s = m],
\end{align*}
where $(s, y) \sim \mathcal{D}'$ denotes uniform sampling on $\mathcal{D}'$.
We collect these estimates into the matrix $\hat{\bm{P}}^{\text{dir}}$.

\paragraph{Indirect estimation}
Alternatively, we can first estimate the matrices $\bm{Q}$ and $\bm{R}$ by using the empirical one-hop transition counts.
Letting $\mathcal{A} = \cup_{(\bm{s}, y) \in \mathcal{D}} \cup_{t < T(s)} \{(s_t, s_{t+1})\}$ and $\mathcal{B} = \cup_{(\bm{s}, y) \in \mathcal{D}} \{(s_{T(\bm{s})}, y)\}$ be the union of all transitions between successive transient states and between transient and absorbing states, respectively, and $\mathcal{T} = \mathcal{A} \cup \mathcal{B}$, we write
\begin{align}
\label{eq:empiricalchain}
\hat{q}_{mm'} &= \mathbf{E}_{(s, s') \sim \mathcal{T}} [\mathbf{1}_{\{s' = m'\}} \mid s = m],
&\hat{r}_{mk} &= \mathbf{E}_{(s, s') \sim \mathcal{T}} [\mathbf{1}_{\{s' = k\}} \mid s = m].
\end{align}
We can then plug the estimates $\hat{\bm{Q}}$ and $\hat{\bm{R}}$ into the fixed-point iteration~\eqref{eq:tabiter}, and iterate until convergence.
We denote the resulting estimate by $\hat{\bm{P}}^{\text{ind}}$.
By Proposition~\ref{prop:convergence}, this fixed point corresponds to the exact absorption probability matrix of the empirical Markov chain $(\hat{\bm{Q}}, \hat{\bm{R}})$, but it is only an approximation of the ground-truth the absorption probabilities.

The following proposition relates the TC loss~\eqref{eq:tcloss} and its iterative optimization procedure~\eqref{eq:tcoptim}, introduced in Section~\ref{sec:method}, to the indirect estimator $\hat{\bm{P}}^{\text{ind}}$ we have just derived.
\begin{proposition}
\label{prop:tcequiv}
Let $p_{\bm{\theta}}(y = k \mid s_t = m) \doteq \theta_{mk}$.
Then, the TC iterative optimization procedure \eqref{eq:tcloss} is equivalent to the fixed-point iteration \eqref{eq:tabiter} on the empirical Markov chain $(\hat{\bm{Q}}, \hat{\bm{R}})$ defined by~\eqref{eq:empiricalchain}.
\end{proposition}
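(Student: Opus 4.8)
The plan is to exploit the tabular parameterization, under which the per-state subproblems decouple, and then to recognize the closed-form minimizer of each subproblem as one row of the update $\hat{\bm{Q}}\bm{P} + \hat{\bm{R}}$. First I would fix the iteration index $i$, set $\bm{\theta}' = \bm{\theta}^{(i)}$, and collect from the total objective $\sum_n \ell_{\text{TC}}(\bm{\theta};\bm{\theta}',\bm{s}^n,y^n)$ every cross-entropy term in which the model distribution $\bm{p}_{\bm{\theta}}(\cdot \mid s_t)$ appears as the second argument with $s_t = m$. Since the entries $\theta_{mk}$ for distinct $m$ occur in disjoint groups of terms, minimizing the total objective reduces to independently minimizing, for each transient state $m$, an expression of the form $\sum_j H[\bm{r}_j \Vert \bm{p}_{\bm{\theta}}(\cdot \mid m)]$, where the targets $\bm{r}_j$ run over the training signals attached to the occurrences of $m$ as a source state.

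Next I would record the elementary identity that, over probability vectors $\bm{q}$, the quantity $\sum_j H[\bm{r}_j \Vert \bm{q}]$ is minimized by the average $\frac{1}{J}\sum_j \bm{r}_j$ of the simplex-valued targets: expanding the cross-entropy and introducing a Lagrange multiplier for the constraint $\sum_k q_k = 1$ gives $q_k \propto \sum_j r_{jk}$, and because each $\bm{r}_j$ sums to one the normalizer equals the number $J$ of terms. Applying this to state $m$, the minimizing row $\bm{p}_{\bm{\theta}^{(i+1)}}(\cdot \mid m)$ is exactly the mean of the targets over all transitions leaving $m$.

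It then remains to identify these targets. Each appearance of $m$ as a source state in the combined one-hop transition multiset $\mathcal{T} = \mathcal{A} \cup \mathcal{B}$ contributes precisely one target: a transient-to-transient transition $m \to m'$ in $\mathcal{A}$ (from the sum in \eqref{eq:tcloss}) contributes the reference row $\bm{p}_{\bm{\theta}'}(\cdot \mid m')$, while a terminal transition $m \to k$ into an absorbing class in $\mathcal{B}$ (from the first term of \eqref{eq:tcloss}) contributes the one-hot target $\bm{\delta}_k$. Averaging over the number $n_m$ of transitions leaving $m$ and recognizing the empirical frequencies $\hat{q}_{mm'}$ and $\hat{r}_{mk}$ from \eqref{eq:empiricalchain} --- which share the common denominator $n_m$ --- gives, for every $k$, $\theta^{(i+1)}_{mk} = \sum_{m'} \hat{q}_{mm'} \theta^{(i)}_{m'k} + \hat{r}_{mk}$. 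Stacking over all $m$ and $k$ yields $\bm{\theta}^{(i+1)} = \hat{\bm{Q}}\bm{\theta}^{(i)} + \hat{\bm{R}}$, which is \eqref{eq:tabiter} applied to the empirical chain $(\hat{\bm{Q}}, \hat{\bm{R}})$.

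I expect the main obstacle to be the bookkeeping in the last step: one has to check that grouping the cross-entropy terms by their conditioning state partitions the objective exactly, that each transition of $\mathcal{T}$ is counted with the right multiplicity (the unions over sequences and timesteps are multisets), and that the denominators produced by the averaging coincide with those implicit in the conditional expectations \eqref{eq:empiricalchain}. The decoupling and the cross-entropy-minimizer identity are otherwise routine.
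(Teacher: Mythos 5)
Your proposal is correct and takes essentially the same route as the paper's proof: both decouple the objective by source state $m$, observe that the terms attached to $m$ are cross-entropies against targets $\bm{\delta}_k$ (from $\mathcal{B}$) and $\bm{\theta}^{(i)}_{m'}$ (from $\mathcal{A}$) whose weighted average is $\hat{\bm{r}}_m + \hat{\bm{q}}_m^\top \bm{\Theta}^{(i)}$, and conclude that the per-state minimizer stacks into $\hat{\bm{Q}}\bm{\Theta}^{(i)} + \hat{\bm{R}}$, i.e., one step of \eqref{eq:tabiter}. The only cosmetic difference is that you establish the minimizer of a sum of cross-entropies via a Lagrange-multiplier computation, whereas the paper folds the targets into a single mixture using linearity of the cross-entropy in its first argument and then uses that $H[\bm{p} \Vert \bm{q}]$ is minimized at $\bm{q} = \bm{p}$; your remark that the unions defining $\mathcal{A}$, $\mathcal{B}$, $\mathcal{T}$ must be read as multisets so the denominators match those in \eqref{eq:empiricalchain} is a valid and worthwhile bookkeeping point that the paper handles implicitly via the counts $c_m$.
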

It follows that, in the tabular case, the TC estimator \eqref{eq:tcoptim} is guaranteed to converge.
Furthermore, under mild assumptions on the data-generating distribution\footnote{%
Assuming that the initial state distribution and the transition probabilities are such that, for every transient state, there is a non-zero probability of it being sampled in the sequence.
}, the TC estimator is consistent, i.e.,
\begin{align*}
\hat{\bm{P}}^{\text{ind}} \to \bm{P}^\star \ \text{as $N \to \infty$}.
\end{align*}
Similarly, we can relate the optimization of the DCE loss~\eqref{eq:dceloss} to the direct estimator $\hat{\bm{P}}^{\text{dir}}$ (see Proposition~\ref{prop:dceequiv} in Appendix~\ref{app:theory}).

\subsection{Statistical benefits}
\label{sec:datapool}

In general, DCE and TC will not result in the same estimate, i.e., $\hat{\bm{P}}^{\text{ind}} \ne \hat{\bm{P}}^{\text{dir}}$, raising the question: Which estimator is better?
In related work, \citet{cheikhi2023statistical} study direct and indirect estimators for the state-value function of a Markov reward process.
They find that, depending on the structure of the Markov reward process, the indirect estimator can be significantly more data-efficient than (and is always at least as efficient as) the direct estimator.

\begin{figure}[t]
  \centering
  \raisebox{-0.5\height}{\begin{tikzpicture}[->,>=stealth',shorten >=1pt,auto,node distance=2cm,semithick]
  \tikzstyle{every state}=[fill=white,draw=black,text=black, scale=0.7]

  \node[state] at (0.0, 1.4) (11)    {$s_1^1$};
  \node[state] at (0.0, 0.3) (12)    {$s_1^2$};
  \node[state] at (0.0,-1.4) (1W)    {$s_1^W$};

  \node[state] at (1.9, 1.4) (21)    {$s_2^1$};
  \node[state] at (1.9, 0.3) (22)    {$s_2^2$};
  \node[state] at (1.9,-1.4) (2W)    {$s_2^W$};

  \node[state] at (4.0, 1.4) (T1)    {$s_T^1$};
  \node[state] at (4.0, 0.3) (T2)    {$s_T^2$};
  \node[state] at (4.0,-1.4) (TW)    {$s_T^W$};

  \node[state] at (5.8, -0.4) (class0) {$0$};
  \node[state] at (5.8, +0.4) (class1) {$1$};

  \path (11) edge                 node                            {} (21)
        (11) edge                 node                            {} (22)
        (11) edge                 node                            {} (2W)
        (12) edge                 node                            {} (21)
        (12) edge                 node                            {} (22)
        (12) edge                 node                            {} (2W)
        (1W) edge                 node                            {} (21)
        (1W) edge                 node                            {} (22)
        (1W) edge                 node                            {} (2W)
        (12) -- node[auto=false, sloped]{$\cdots$} (1W)
        (22) -- node[auto=false, sloped]{$\cdots$} (2W)

        (21) edge node {} (T1)
        (21) -- node[auto=false, sloped, fill=white]{$\cdots$} (T1)
        (21) edge node {} (T2)
        (21) -- node[auto=false, sloped, fill=white]{$\cdots$} (T2)
        (21) edge node {} (TW)
        (21) -- node[auto=false, sloped, fill=white]{$\cdots$} (TW)

        (22) edge node {} (T1)
        (22) -- node[auto=false, sloped, fill=white]{$\cdots$} (T1)
        (22) edge node {} (T2)
        (22) -- node[auto=false, sloped, fill=white]{$\cdots$} (T2)
        (22) edge node {} (TW)
        (22) -- node[auto=false, sloped, fill=white]{$\cdots$} (TW)

        (2W) edge node {} (T1)
        (2W) -- node[auto=false, sloped, fill=white]{$\cdots$} (T1)
        (2W) edge node {} (T2)
        (2W) -- node[auto=false, sloped, fill=white]{$\cdots$} (T2)
        (2W) edge node {} (TW)
        (2W) -- node[auto=false, sloped, fill=white]{$\cdots$} (TW)

        (T1) edge                 node        {} (class0)
        (T2) edge                 node        {} (class0)
        (TW) edge                 node        {} (class0)
        (T1) edge                 node        {} (class1)
        (T2) edge                 node        {} (class1)
        (TW) edge                 node        {} (class1);
\end{tikzpicture}}
  \hspace{8mm}
  \raisebox{-0.5\height}{\includegraphics{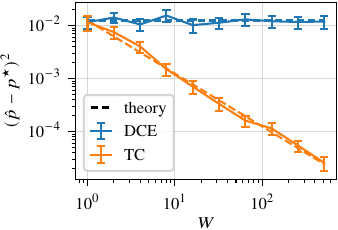}}
  \caption{%
\emph{Left}: Markov chain with $T$ layers of $W$ states each, and two absorbing states.
\emph{Right}: Mean-squared error of the direct (DCE) and indirect (TC) estimates for a state in the first layer state as a function of $W$.
We set $N = 20W$ and report the mean and \SI{95}{\percent} confidence intervals over \num{100} runs.
}
  \label{fig:synthetic}
\end{figure}

In Figure~\ref{fig:synthetic}, we revisit one of their examples and adapt it to our classification setting.
We consider an absorbing Markov chain with $T$ layers of $W$ distinct states each.
For any $t < T$, the probability of transitioning from a state at layer $t$ to any other state at layer $t + 1$ is $1 / W$.
From any state at layer $T$, we transition to one of two absorbing states, $0$ and $1$, with probability $1/2$ each.
Given this, it is easy to verify that $p^\star_{mk} = 1/2$ for any transient state $m$ and any absorbing state $k \in \{0, 1\}$.
We collect $N$ trajectories of length $T+1$, sampling the first state uniformly at random, and subsequent states as described previously.
The next proposition shows that, for any state in the first layer, the expected squared error of the indirect estimator is $W$ times smaller than that of the direct estimator.
\begin{proposition}[Adapted from \citep{cheikhi2023statistical}]
\label{prop:cheikhi}
For any $T \ge 1$ and any state $m$ in the first layer,
\begin{align*}
\mathbf{E} \left[ (\hat{p}_{mk}^{\textup{ind}} - p^\star_{mk})^2 \right]
\ \big / \ \mathbf{E} \left[ (\hat{p}_{mk}^{\textup{dir}} - p^\star_{mk})^2 \right]
\xrightarrow{N \to \infty} 1 / W.
\end{align*}
\end{proposition}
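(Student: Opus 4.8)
The plan is to compare the two estimators through their leading-order variances. Since every absorption probability in this chain equals $p^\star_{mk} = 1/2$, both estimators are asymptotically unbiased, and I would argue that their squared biases are $O(1/N^2)$ while their variances are $\Theta(1/N)$; hence, for fixed $W$ and $N \to \infty$, the mean-squared error of each coincides with its variance up to a $1 + o(1)$ factor, and the stated limit follows from the ratio of these two variances. I fix the target class to $k = 1$ without loss of generality, by symmetry of the two absorbing states.

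For the direct estimator, the key observation is that a first-layer state $m$ is visited only at $t = 1$ (states in distinct layers are distinct), so the only trajectories passing through $m$ are those with $s_1 = m$. Their number is $N_m \sim \mathrm{Binomial}(N, 1/W)$ and, conditionally on $N_m$, the estimate $\hat p^{\text{dir}}_{m1}$ is the mean of $N_m$ independent $\mathrm{Bernoulli}(1/2)$ outcomes. Thus $\mathbf{E}[\hat p^{\text{dir}}_{m1} \mid N_m] = 1/2$ and $\mathbf{E}[(\hat p^{\text{dir}}_{m1} - 1/2)^2] = \mathbf{E}[1/(4 N_m)]$. A standard concentration argument gives $\mathbf{E}[1/N_m] = (W/N)(1 + o(1))$ (the event $N_m = 0$ has exponentially small probability), so the direct MSE is $\frac{W}{4N}(1 + o(1))$.

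For the indirect estimator I would expand $\hat p^{\text{ind}}_{m1}$ as a sum over directed paths $m \to m_2 \to \cdots \to m_T$ through the empirical transient chain, each path contributing the product of its empirical transition probabilities times the empirical terminal probability $\hat r_{m_T} \doteq \hat r_{m_T, 1}$ of its endpoint. Writing every empirical transition probability as $1/W + \epsilon$ and every $\hat r_{m_T}$ as $1/2 + \eta_{m_T}$, I carry out a first-order (delta-method) expansion about the true uniform chain. The zeroth-order term is exactly $1/2$. The terms first-order in the transition fluctuations $\epsilon$ cancel: since the ground-truth absorption probability is constant ($\equiv 1/2$) across the states of every layer, and each empirical row of outgoing transitions sums to one (so its fluctuations sum to zero), these contributions are a constant times a vanishing sum. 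Hence only the terminal fluctuations survive at first order, and by the layer symmetry each $\hat r_{m_T}$ enters with total weight $1/W$, giving $\hat p^{\text{ind}}_{m1} - 1/2 = \frac{1}{W}\sum_{m_T}\eta_{m_T} + (\text{higher order})$. The terminal estimates are formed from disjoint sets of terminal transitions---each trajectory reaches exactly one layer-$T$ state, and these endpoints are uniformly distributed over the $W$ of them---so the $\eta_{m_T}$ are conditionally independent with variance $\frac{1/4}{N_{m_T}}$, where $N_{m_T} \sim \mathrm{Binomial}(N, 1/W)$. Summing yields an indirect MSE of $\frac{1}{W^2}\cdot W\cdot\frac{W}{4N}(1+o(1)) = \frac{1}{4N}(1+o(1))$.

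Dividing the two expressions gives the claimed limit $1/W$. The principal obstacle is justifying the first-order truncation: I must show that the omitted monomials---the cross terms $\epsilon\,\eta$ and higher powers of $\epsilon$, summed over all paths---contribute only $o(1/N)$ to the variance and $O(1/N^2)$ to the squared bias, so that the mean-squared error of each estimator is governed by the leading variance computed above. With $W$ fixed and $N \to \infty$ this holds because every empirical transition and terminal probability concentrates at rate $O_p(\sqrt{W/N})$, making each higher-order monomial $O_p(1/N)$; the remaining care is in handling the statistical dependence between $\hat{\bm Q}$ and the $\hat r_{m_T}$ (all estimated from the same trajectories) and in conditioning on the visit counts $N_m$ and $N_{m_T}$ being positive, which fails only on an event of exponentially small probability.
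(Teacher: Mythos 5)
Your proof is correct in its essentials, but it takes a genuinely different route from the paper. The paper's proof is a reduction: it collapses the two absorbing states into a single terminal state of a Markov reward process whose value function is $V(m) = p^\star_{m1}$, identifies $\hat{p}^{\text{dir}}$ with the Monte-Carlo value estimator and $\hat{p}^{\text{ind}}$ with the TD estimator, computes the coupling coefficient $C(m, m') = 1/W$ and inverse trajectory pooling coefficient $C(m) = 1/W$ from the expected visit and transition counts, and then invokes Theorem~7.2 of \citet{cheikhi2023statistical} wholesale. You instead carry out a self-contained delta-method analysis: your direct-estimator computation ($N_m \sim \mathrm{Binomial}(N, 1/W)$, MSE $= \mathbf{E}[1/(4N_m)] \sim W/(4N)$) is exact, and your first-order expansion of the empirical absorption probability is sound---writing $f(\bm{Q}, \bm{R}) = [(\bm{I} - \bm{Q})^{-1}\bm{R}]_{m1}$, the transient-fluctuation term is $[(\bm{I}-\bm{Q})^{-1}\, \delta\bm{Q} \,(1/2)\bm{1}]_m$, which vanishes because each perturbed row of $\hat{\bm{Q}}$ sums to zero against the constant vector, leaving only the terminal fluctuations with the uniform weights $[(\bm{I}-\bm{Q})^{-1}]_{m, m_T} = 1/W$, hence indirect MSE $\sim 1/(4N)$. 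Your approach buys explicit constants and an elementary argument that does not depend on the machinery of \citet{cheikhi2023statistical}; the paper's approach buys brevity and outsources exactly the step you correctly flag as your remaining obligation, namely showing that the higher-order monomials in $(\epsilon, \eta)$ contribute $o(1/N)$ to the MSE in expectation (routine here given that all quantities are bounded in $[0,1]$ and binomial concentration handles the zero-count events, but it is real work rather than a formality). One side observation your method surfaces: the uniform weight $1/W$ on the terminal fluctuations requires at least one transition between transient layers, i.e.\ $T \ge 2$; at $T = 1$ the two estimators coincide for a first-layer state and the ratio is $1$, an edge case glossed over by the statement's ``$T \ge 1$'' in the paper as well (its proof implicitly uses a second layer when computing $C(m, m')$).
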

We validate this result empirically in Figure~\ref{fig:synthetic} (right), where we report on numerical simulations using $N = 20W$.
In this Markov chain, the indirect estimator is increasingly more data-efficient as $W$ increases.
Intuitively, the indirect approach acts as a form of regularization, by requiring the solution to satisfy the temporal-consistency property~\eqref{eq:consistency}, which arises from the problem's Markov structure.
The indirect approach benefits from a form of data pooling:
By minimizing the inconsistency across successive states instead of regressing the target outcome directly, we take advantage of information from other trajectories that originated from different states and crossed paths.

\paragraph{Beyond the tabular setting}
In the remainder of this paper, we move beyond the tabular setting studied above and fine-tune parametric large language models on very large state spaces.
The theoretical results developed in this section no longer strictly apply, yet we believe that the underlying intuition remains valuable for interpreting our method’s performance on complex, real-world tasks.
In text classification, for example, many distinct word sequences can result in a similar \emph{semantic state} predictive of the target label.
Optimizing for consistency across successive states can therefore improve data efficiency, similarly to the tabular toy example.
With temporal consistency, the model implicitly exploits information from sequences that begin differently but reach similar intermediate semantic states, the same data-pooling phenomenon underpinning the gains of the indirect method in Figure~\ref{fig:synthetic}.

\section{Empirical evaluation}
\label{sec:eval}

In this section, we apply our methodology to text classification with decoder-only transformers.
First, in Section~\ref{sec:textclass}, we evaluate multiple different approaches to training incremental classifiers.
We compare the predictive performance of models on four well-known text classification benchmarks.
Then, in Section~\ref{sec:verification}, we consider a concrete application to verifying LLM generations.
We show that an accurate token-level correctness classifier enables solving grade-school math problems computationally more efficiently.

\paragraph{Model architecture \& training}
Decoder-only (i.e., causal) transformers \citep{liu2018generating, radford2019language} are particularly well-suited to incremental sequence classification.
As the $t$th output of a decoder on an input sequence $\bm{x}$ depends only on the prefix $\bm{x}_{\le t}$, we can compute predictions at every prefix efficiently, with a single forward inference pass.
Throughout this section, we model class probabilities as
\begin{align}
\label{eq:classhead}
\bm{p}_{\bm{\theta}}( \cdot \mid \bm{x}_{\le t} ) = \Softmax(\bm{A} \bm{h}_t + \bm{b}),
\end{align}
where $\bm{h}_t \in \mathbf{R}^D$ is the hidden vector at the last layer of a transformer at position $t$, and $\bm{A} \in \mathbf{R}^{K \times D}$ and $\bm{b} \in \mathbf{R}^K$ are the parameters of a classification head.
We start with a pre-trained language model, and jointly optimize $(\bm{A}, \bm{b})$ as well as all the parameters of the transformer, which we collectively refer to as $\bm{\theta}$.
We make two minor practical adjustments with respect to the optimization procedure outlined in Section~\ref{sec:method}.
First, we update the parameters using stochastic gradient updates.
Second, we average the loss over all prefixes of each sequence, instead of summing them.
This means that every sequence contributes to the loss equally, irrespective of its length.
Appendix~\ref{app:eval} describes the precise training procedure and documents how we select hyperparameters.

\subsection{Text classification datasets}
\label{sec:textclass}

We consider four text classification datasets, spanning tasks such as movie review sentiment prediction (\textsc{imdb}~\citep{maas2011learning}) and topic classification (\textsc{ohsumed}~\citep{moschitti2004complex}, \textsc{newsgroups}~\citep{lang1995newsweeder}, \textsc{ag-news}~\citep{delcorso2005ranking}).
The number of classes $K$ ranges from \num{2} to \num{23}.
We provide summary statistics for all datasets in Appendix~\ref{app:textclass}, including the number of training and test samples and the distribution of document length.
In addition to the standard setting, where the goal is to predict the class label given the full sequence $\bm{x}$, we are interested in evaluating the performance of classifiers in the \emph{incremental} setting, where we need to make a prediction after observing only a prefix $\bm{x}_{\le t}$ consisting of the first $t$ tokens.

We fine-tune pre-trained language models from the OPT family \citep{zhang2022opt}.
Unless otherwise noted, we report results on the \SI{125}{\million}-parameter version of the family, with hidden size $D = 768$.
Our primary focus is on comparing models trained by using the DCE~\eqref{eq:dceloss} and TC-$\lambda$~\eqref{eq:tclloss} loss functions.
Both of these approaches train a single model to classify prefixes of any length (including the full sequence).
In addition, we also consider the following baselines and competing methods.
\begin{description}
\item[Most frequent]
A naive baseline that always predicts the most frequent class in the training split.

\item[GPT-4o]
We design a simple prompt asking GPT-4o (version 2024-08-06) to classify a text, by giving the list of classes, one example for each class, and the text itself.
We access GPT-4o through OpenAI's commercial API.
Details are provided in Appendix~\ref{app:textclass}.

\item[Filtering]
We fine-tune a class-conditional language model $p_{\bm{\theta}}(\bm{x} \mid y)$ on the training data, by using a standard next-token prediction task.
At test time, we use Bayes' rule to reverse the conditional probability as $\hat{p}(y \mid x) = Z^{-1} p_{\bm{\theta}}(\bm{x} \mid y) p(y)$, where $p(y)$ is a prior distribution and $Z = \sum_y p_{\bm{\theta}}(\bm{x} \mid y) p(y)$.

\item[Last token] 
Similarly to the DCE loss~\eqref{eq:dceloss}, we minimize the cross-entropy relative to the observed label.
But unlike the DCE loss, we include only a single cross-entropy term per sequence, corresponding to the prediction based on the hidden vector $\bm{h}_T$ at the last token (i.e., capturing the full sequence).
To the best of our knowledge, this is the most widespread approach to text classification with decoder-only transformers \citep{wolf2020huggingface, dukic2024looking, kadavath2022language}.

\item[Specialist]
This approach is similar to the \emph{last token} method, but improves upon it by training a distinct, specialized model for each prefix length.
Instead of full sequences, each model is trained exclusively on prefixes of the corresponding length.

\item[Direct $\ell_2$ loss]
This approach is similar to DCE but removes the softmax and replaces the cross-entropy loss with a squared loss.
This is analogous to standard offline Monte Carlo methods for value function estimation in RL \citep[Chap.~5]{sutton2018reinforcement}.

\item[LSTD($\lambda$)]
The offline temporal-difference value estimation method of \citet{bradtke1996linear}, recently applied to language modelling in \citet{mudgal2024controlled}.
This approach is similar to TC-$\lambda$, without the softmax and with a squared loss instead of the cross-entropy loss.
\end{description}

For TC-$\lambda$ and LSTD($\lambda$), we treat $\lambda$ as a hyperparameter.
Values for this and for all other hyperparameters are presented in Appendix~\ref{app:textclass}.
The time it takes to train a TC-$\lambda$ model is virtually indistinguishable (within \SI{1}{\percent}) from that used to train a DCE model, confirming that the overhead required to compute the soft targets $\{\bm{z}_t\}$ in~\eqref{eq:tclloss} is negligible compared to the cost of LLM forward and backward passes.

\begin{table}[t]
  \footnotesize
  \caption{%
Predictive performance of incremental text classifiers on four datasets.
We report the classification accuracy on \num{4}-token and \num{16}-token prefixes, and on full sequences (all tokens).
We highlight the \textbf{best} and \U{second-best} performing models.}
  \label{tab:textclassacc}
  \centering
  \sisetup{%
  table-text-alignment=right, %
  mode=text, %
  table-format=2.1 %
}
\begin{tabular}{l SSS SSS SSS SSS}
  \toprule
    & \multicolumn{3}{c}{\textsc{ohsumed}}
    & \multicolumn{3}{c}{\textsc{newsgroups}}
    & \multicolumn{3}{c}{\textsc{imdb}}
    & \multicolumn{3}{c}{\textsc{ag-news}}
    \\
  \cmidrule(lr){2-4}
    \cmidrule(lr){5-7}
    \cmidrule(lr){8-10}
    \cmidrule(lr){11-13}
    & {$4$} & {$16$} & {all}
    & {$4$} & {$16$} & {all}
    & {$4$} & {$16$} & {all}
    & {$4$} & {$16$} & {all}
    \\
  \midrule
  Most frequent
    & 16.0 & 16.0 & 16.0  %
    &  5.3 &  5.3 &  5.3  %
    & 50.0 & 50.0 & 50.0  %
    & 25.0 & 25.0 & 25.0  %
    \\
  GPT 4o
    & 31.5 & 54.0 & 57.5  %
    & 07.5 & 11.0 & 80.4  %
    & 58.0 & 67.0 & 94.3  %
    & 77.4 & 87.4 & 88.3  %
    \\
  \addlinespace
  Filtering
    & 22.1 & 46.9 & 46.2  %
    & 10.4 & 19.8 & 72.0  %
    & 64.1 & 73.4 & 92.1  %
    & 77.9 & 86.6 & 87.2  %
    \\
  Last token
    & 16.7 & 45.0 & 80.6  %
    &  6.5 &  9.4 & 87.9  %
    & 56.6 & 68.4 & \U{94.7}  %
    & 54.4 & 78.3 & 94.8  %
    \\
  Specialist
    & 24.5 &    59.8 &    80.6  %
    & 23.6 & \B 47.3 &    87.9  %
    & 59.8 &    73.3 & \U{94.7}  %
    & 77.8 &    91.8 &    94.8  %
    \\
  \addlinespace
  Direct $\ell_2$ loss
    & 30.3 & 65.0 & 80.4  %
    & 19.7 & 29.9 & 88.3  %
    & 63.6 & 74.7 & 94.3  %
    & 80.0 & 92.6 & 94.7  %
    \\
  LSTD($\lambda$)
    & \U{32.7} &    64.9 &     78.0   %
    &    26.2  &    36.7 &     87.8   %
    & \U{64.6} & \U{75.4} & \U{94.7}  %
    & \U{81.1} & \U{92.8} & \U{94.9}  %
    \\
  \addlinespace
  DCE
    &    30.5  & \U{65.5} & \U{81.1}  %
    & \U{27.7} &    40.1  & \B 89.0   %
    &    63.5  &    74.7  &    94.4   %
    &    80.0  &    92.6  &    94.8   %
    \\
  TC-$\lambda$ (ours)
    & \B 33.7 & \B 68.3 & \B 81.8  %
    & \B 33.4 & \U{44.7} & \U{88.5}  %
    & \B 64.7 & \B 75.7 & \B 94.9  %
    & \B 81.4 & \B 93.0 & \B 95.0  %
    \\
  \bottomrule
\end{tabular}

\end{table}

In Table~\ref{tab:textclassacc}, we report the predictive accuracy of classifiers on hold-out sequences, given prefixes of \num{4} tokens, \num{16} tokens, and full sequences.
For DCE and TC-$\lambda$ and the corresponding squared-loss methods, we report additional metrics and more prefix lengths in the appendix.
As expected, the accuracy of every classifier increases as more tokens are available.
However, even with only \num{4} or \num{16} tokens (often representing a small fraction of the full sequence), some approaches reach a non-trivial accuracy.

We observe that TC-$\lambda$ outperforms DCE and other approaches in all but two cases, where it ranks second.
This supports the insights from Section~\ref{sec:datapool}: incorporating temporal-consistency into the loss function improves predictive performance, even on real-world sequences and with large parametric models.
Gains are more substantial for short prefixes, but---perhaps surprisingly---optimizing for temporal-consistency also improves full-sequence classification accuracy.
Relatedly, we observe that training incremental classifiers is beneficial even when the goal is only to classify full sequences.
Indeed, both DCE and TC-$\lambda$ outperform the \emph{last-token} approach in this setting.
This was noticed by \citet{cobbe2021training}, who suggest that including prefixes in the loss provides a useful auxiliary signal.
Our findings reinforce this observation.
Finally, we note that the $\emph{filtering}$ method generally underperforms approaches using a classification head, consistent with conventional wisdom that, for classification problems, discriminative models can be more effective than generative models~\citep{barber2012bayesian}.

\paragraph{Cross-entropy vs. squared loss}
When comparing DCE and TC-$\lambda$, which use a cross-entropy loss, to their squared-loss counterparts (direct $\ell_2$ loss and LSTD($\lambda$), respectively), we observe the following.
Temporal consistency tends to improves performance regardless of the loss function, especially on partial sequences.
Models trained with a cross-entropy loss perform significantly better than those trained with a squared loss on datasets with many classes (\textsc{ohsumed} and \textsc{newsgroups}).
For binary or three-way classification tasks (\textsc{imdb} and \textsc{ag-news}), both loss functions yield a similar accuracy.
However, Figure~\ref{fig:multimetrics} in the appendix shows that methods optimizing a squared loss produce noticeably less well-calibrated predictive probabilities.

\paragraph{Increasing model size}
Focusing on the \textsc{ohsumed} dataset and the DCE and TC-$\lambda$ methods, we train OPT models with \SI{125}{\million}, \SI{350}{\million}, and \SI{1.3}{\billion} parameters.
Figure~\ref{fig:ohsumed-large} shows the area under the ROC curve (ROC AUC) for predictions after \num{4}, \num{8}, and \num{16} tokens, as well as for full sequences.
These results offer the following perspective on the benefits of temporal consistency:
DCE requires a model that is approximately $10\times$ larger to match the performance of TC-$\lambda$.

\begin{figure}[t]
  \centering
  \includegraphics{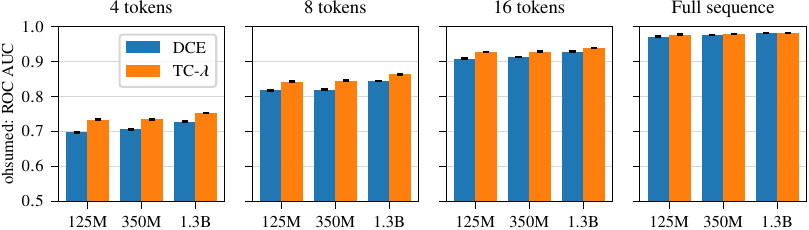}
  \caption{%
Predictive performance of OPT models with \SI{125}{\million}, \SI{350}{\million}, and \SI{1.3}{\billion} parameters, respectively, on the \textsc{ohsumed} dataset.
We report the area under the ROC curve (mean and \SI{95}{\percent} confidence interval over 10 runs; higher is better).
}
  \label{fig:ohsumed-large}
\end{figure}

\paragraph{Varying the temporal-consistency parameter}
In Figure~\ref{fig:tempconst-lambdas} (\emph{left}), we show the accuracy of TC-$\lambda$ models trained on \textsc{ohsumed} with different values of $\lambda$.
The setting $\lambda = 1$, corresponsing to DCE, is never optimal, but the best setting depends on the prefix length.
When optimizing for full-sequence accuracy, we observe empirically (across the four datasets) that performance is maximized with an effective lookahead of \num{5}--\num{50} tokens, corresponding values of $\lambda$ between \num{0.8} and \num{0.98}.

\begin{figure}[t]
  \centering
  \includegraphics{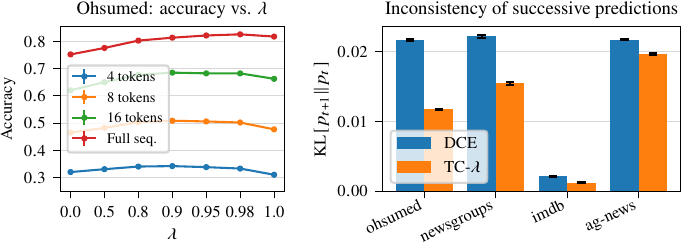}
  \caption{%
\emph{Left}: Accuracy of OPT-125M classifiers on \textsc{ohsumed} as a function of $\lambda$ (mean and \SI{95}{\percent} CI over 5 runs).
\emph{Right}: Average KL-divergence between successive predictive distributions (mean and \SI{95}{\percent} CI over 10 runs).
Lower values correspond to predictive distributions that are more similar across successive time steps.
}
  \label{fig:tempconst-lambdas}
\end{figure}

\paragraph{Beyond predictive performance}
We examine the effects of optimizing for temporal consistency more closely.
Given predictive distributions $\bm{p}_t$ and $\bm{p}_{t+1}$ produced after seeing $t$ and $t+1$ tokens, respectively, we compute the divergence $\KL[\bm{p}_{t+1} \Vert \bm{p}_t]$.
Figure~\ref{fig:tempconst-lambdas} (\emph{right}) shows this divergence, averaged of all successive prefixes of all sequences in the test set.
This is the quantity that TC-$\lambda$ explicitly optimizes\footnote{%
$\KL[\bm{p}_{t+1} \Vert \bm{p}_t] = H[\bm{p}_{t+1} \Vert \bm{p}_t] + \text{cst}$, where the constant is independent of $\bm{p}_t$.
}, and unsurprisingly DCE models are significantly less consistent.
While we view temporal consistency primarily as a means to improve predictive performance, stable and consistent predictions might be valuable in their own right, for example in decision-making systems~\citep{schaefer2020teaser}.

\subsection{Language model verification on GSM8K}
\label{sec:verification}

Next, we consider the problem of using an LLM to solve math word problems.
In seminal work, \citet{cobbe2021training} propose a simple two-step procedure, consisting of
\begin{enuminline}
\item sampling $N$ generations from the LLM, and
\item scoring them with a \emph{verifier} model, ultimately selecting the generation with the largest score.
\end{enuminline}
This best-of-$N$ approach was shown to significantly increase performance over a single generation, at the expense of a larger computational cost (due to sampling $N$ generations).
In this setting, \emph{incremental} verifiers bring substantial benefits:
If the verifier is able to accurately distinguish between correct and incorrect generations early on, we can focus computational resources on extending only the most promising generations.
This idea has recently received significant attention \citep{yang2021fudge, uesato2022solving, lightman2023lets, wang2023math, mudgal2024controlled, lee2024token}.
In this section, we demonstrate that our TC-$\lambda$ approach holds promise for learning better incremental LLM verifiers.

We study GSM8K, a dataset of grade-school math problems and their solutions~\citep{cobbe2021training}.
For our experiments, we use Qwen2.5-0.5B, a pre-trained language model with \SI{0.5}{\billion} parameters that is known to perform well on GSM8K for its size~\citep{qwen2024qwen25}.
We proceed as follows.
For each of the \num{7473} problems in the training set, we sample \num{32} generations with temperature \num{0.7}.
We label each generation based on whether or not the answer extracted from the generation matches the ground-truth solution.
For each problem, we keep exactly one correct and one incorrect generation (sampled uniformly at random), resulting in a balanced dataset.
We then train incremental verifiers by fine-tuning Qwen2.5-0.5B with a classification head, as in~\eqref{eq:classhead}, using DCE and TC-$\lambda$ (details provided in Appendix~\ref{app:verification}).
Figure~\ref{fig:gsm8k-traces} illustrates our setup:
Given a problem, our model predicts the probability, token after token, that a generation will eventually produce the correct answer.

\begin{figure}[t]
  \centering
  \includegraphics{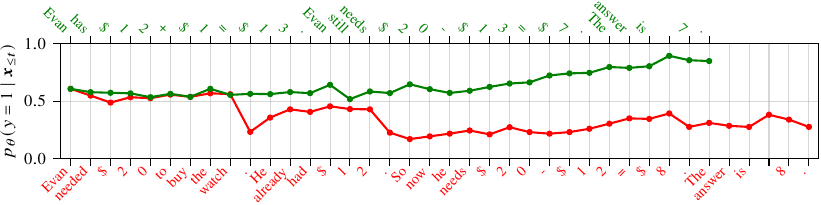}
  \caption{%
Predicted probability of correctness for two generations of Qwen2.5-0.5B for the prompt \emph{David found \$12 on the street. He then gave it to his friend Evan who has \$1 and needed to buy a watch worth \$20. How much does Evan still need?}
}
  \label{fig:gsm8k-traces}
\end{figure}

\paragraph{Predictive performance}
For this binary classification task, the ROC AUC metric is particularly relevant, as it reflects the probability that the model correctly ranks a randomly-selected correct generation higher than an incorrect one \citep{hanley1982meaning}.
Figure~\ref{fig:gsm8k-qwen} (\emph{left}) shows the performance of DCE and TC-$\lambda$ as a function of the number of generated tokens.
We observe that TC-$\lambda$ significantly outperforms DCE when the number of tokens is small.
With only 8 tokens, TC-$\lambda$ is almost \SI{70}{\percent} accurate in distinguishing correct from incorrect generations.
Consistent with our findings on the text classification experiments, on this binary classification task, models trained with a squared loss perform almost identically to those trained with a cross-entropy loss in terms of ROC AUC.
However, as shown in Figure~\ref{fig:gsm8k-qwen-nll} in the appendix, they produce significantly worse predictive probabilities.

\begin{figure}[t]
  \centering
  \includegraphics{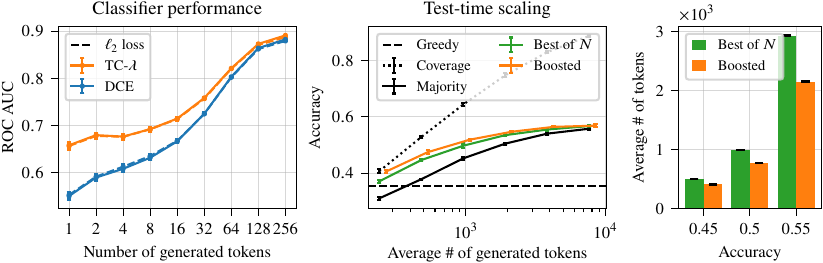}
  \caption{%
Incremental verification for Qwen2.5-0.5B on GSM8K.
\emph{Left}: The TC-$\lambda$ verifier is better at distinguishing between correct and incorrect generations early on.
\emph{Center \& right}: A better trade-off between accuracy and compute can be obtained by stopping unpromising generations early on.
}
  \label{fig:gsm8k-qwen}
\end{figure}

\paragraph{Application to test-time scaling}
We illustrate the benefit of accurate early correctness predictions in a basic test-time scaling scenario, where we trade off answer accuracy against computational cost, measured by the total number of generated tokens.
We propose a simple modification to the best-of-$N$ method, which we call \emph{boosted best-of-$N$} and which is a simplistic version of the speculative rejection method recently proposed by \citet{sun2024fast}.
Sample \num{10} tokens for each of $2N$ independent generations, rank them using the TC-$\lambda$ incremental verifier, and continue sampling the remaining tokens for the top-$N$ generations until completion.
Finally, apply the verifier again to the $N$ completed generations and select the best one.
Figure~\ref{fig:gsm8k-qwen} (\emph{center \& right}) shows that our approach compares favorably to vanilla best-of-$N$ and to majority voting~\citep{wang2023self}.
For a given level of accuracy, the boosted approach requires \SI{23}{\percent}--\SI{33}{\percent} fewer tokens.

\section{Limitations \& future work}
\label{sec:conclusions}

We have introduced TC-$\lambda$, a loss function for training incremental sequence classifiers that draws on insights from TD learning to improve predictive performance.
Our empirical results focus on text classification with transformers, but our approach is architecture-agnostic and applicable to any sequence classification problem.
Future work could explore its effectiveness on multimodal applications, such as predicting task success from video frames in robotics~\citep{du2023vision} and games~\citep{fan2022minedojo}.

Given a limited compute budget, we have prioritized small-scale experiments.
This has enabled us to run comprehensive hyperparameter sweeps and to report performance averaged over multiple random seeds, increasing our confidence in the results.
Our experiments suggest that TC-$\lambda$ could benefit models at all scales (Figure~\ref{fig:ohsumed-large}).
However, the effectiveness of temporally-consistent methods with models larger than \SI{1.3}{\billion} parameters has not yet been systematically evaluated.
Likewise, while our experiments on LLM verification and test-time scaling show promise, further evaluation is needed, particularly in combination with state-of-the-art approaches such as speculative rejection~\citep{sun2024fast}.

Finally, a promising direction for future work is applying TC-$\lambda$ to multi-token prediction~\citep{stern2018blockwise}, which has been shown to improve LLM pre-training~\citep{gloeckle2024better, liu2024deepseek} and accelerate inference~\citep{cai2024medusa}.
Importantly, calibrated multi-token predictive distributions should satisfy a temporal-consistency condition that is similar to~\eqref{eq:consistency}, making this a natural fit for our approach.

\bibliography{tdclass}
\section*{NeurIPS Paper Checklist}

\begin{enumerate}

\item {\bf Claims}
    \item[] Question: Do the main claims made in the abstract and introduction accurately reflect the paper's contributions and scope?
    \item[] Answer: \answerYes{}
    \item[] Justification: The claims made in the abstract and in the introduction are supported by the remainder of the paper, and they respect the guidelines outlined below.
    \item[] Guidelines:
    \begin{itemize}
        \item The answer NA means that the abstract and introduction do not include the claims made in the paper.
        \item The abstract and/or introduction should clearly state the claims made, including the contributions made in the paper and important assumptions and limitations. A No or NA answer to this question will not be perceived well by the reviewers. 
        \item The claims made should match theoretical and experimental results, and reflect how much the results can be expected to generalize to other settings. 
        \item It is fine to include aspirational goals as motivation as long as it is clear that these goals are not attained by the paper. 
    \end{itemize}

\item {\bf Limitations}
    \item[] Question: Does the paper discuss the limitations of the work performed by the authors?
    \item[] Answer: \answerYes{}
    \item[] Justification: In Section~\ref{sec:theory}, when developing a theory for our approach, we call out specific assumptions that are likely not satisfied in real-world applications. However, the insights appear to be robust, as supported by the empirical results presented in Section~\ref{sec:eval}. Section~\ref{sec:conclusions} discusses additional limitations of our work.
    \item[] Guidelines:
    \begin{itemize}
        \item The answer NA means that the paper has no limitation while the answer No means that the paper has limitations, but those are not discussed in the paper. 
        \item The authors are encouraged to create a separate "Limitations" section in their paper.
        \item The paper should point out any strong assumptions and how robust the results are to violations of these assumptions (e.g., independence assumptions, noiseless settings, model well-specification, asymptotic approximations only holding locally). The authors should reflect on how these assumptions might be violated in practice and what the implications would be.
        \item The authors should reflect on the scope of the claims made, e.g., if the approach was only tested on a few datasets or with a few runs. In general, empirical results often depend on implicit assumptions, which should be articulated.
        \item The authors should reflect on the factors that influence the performance of the approach. For example, a facial recognition algorithm may perform poorly when image resolution is low or images are taken in low lighting. Or a speech-to-text system might not be used reliably to provide closed captions for online lectures because it fails to handle technical jargon.
        \item The authors should discuss the computational efficiency of the proposed algorithms and how they scale with dataset size.
        \item If applicable, the authors should discuss possible limitations of their approach to address problems of privacy and fairness.
        \item While the authors might fear that complete honesty about limitations might be used by reviewers as grounds for rejection, a worse outcome might be that reviewers discover limitations that aren't acknowledged in the paper. The authors should use their best judgment and recognize that individual actions in favor of transparency play an important role in developing norms that preserve the integrity of the community. Reviewers will be specifically instructed to not penalize honesty concerning limitations.
    \end{itemize}

\item {\bf Theory Assumptions and Proofs}
    \item[] Question: For each theoretical result, does the paper provide the full set of assumptions and a complete (and correct) proof?
    \item[] Answer: \answerYes{}
    \item[] Justification: Complete proofs for each of the three propositions presented in Section~\ref{sec:theory} are provided in Appendix~\ref{app:proofs}. For space reasons, we were unable to provide proof sketches in the main body, but we made an effort to introduce the theoretical results gradually, in such a way that the reader should have an intuitive understanding of why each result holds.
    \item[] Guidelines:
    \begin{itemize}
        \item The answer NA means that the paper does not include theoretical results. 
        \item All the theorems, formulas, and proofs in the paper should be numbered and cross-referenced.
        \item All assumptions should be clearly stated or referenced in the statement of any theorems.
        \item The proofs can either appear in the main paper or the supplemental material, but if they appear in the supplemental material, the authors are encouraged to provide a short proof sketch to provide intuition. 
        \item Inversely, any informal proof provided in the core of the paper should be complemented by formal proofs provided in appendix or supplemental material.
        \item Theorems and Lemmas that the proof relies upon should be properly referenced. 
    \end{itemize}

    \item {\bf Experimental Result Reproducibility}
    \item[] Question: Does the paper fully disclose all the information needed to reproduce the main experimental results of the paper to the extent that it affects the main claims and/or conclusions of the paper (regardless of whether the code and data are provided or not)?
    \item[] Answer: \answerYes{} %
    \item[] Justification: We have attempted to document our experiments thoroughly, providing in the appendix many additional details that we deemed out of scope for the main text, such as specific dataset versions, hyperparameter configurations, the model selection procedure, and more. We intend to complement this information with a comprehensive code release upon publication.
    \item[] Guidelines:
    \begin{itemize}
        \item The answer NA means that the paper does not include experiments.
        \item If the paper includes experiments, a No answer to this question will not be perceived well by the reviewers: Making the paper reproducible is important, regardless of whether the code and data are provided or not.
        \item If the contribution is a dataset and/or model, the authors should describe the steps taken to make their results reproducible or verifiable. 
        \item Depending on the contribution, reproducibility can be accomplished in various ways. For example, if the contribution is a novel architecture, describing the architecture fully might suffice, or if the contribution is a specific model and empirical evaluation, it may be necessary to either make it possible for others to replicate the model with the same dataset, or provide access to the model. In general. releasing code and data is often one good way to accomplish this, but reproducibility can also be provided via detailed instructions for how to replicate the results, access to a hosted model (e.g., in the case of a large language model), releasing of a model checkpoint, or other means that are appropriate to the research performed.
        \item While NeurIPS does not require releasing code, the conference does require all submissions to provide some reasonable avenue for reproducibility, which may depend on the nature of the contribution. For example
        \begin{enumerate}
            \item If the contribution is primarily a new algorithm, the paper should make it clear how to reproduce that algorithm.
            \item If the contribution is primarily a new model architecture, the paper should describe the architecture clearly and fully.
            \item If the contribution is a new model (e.g., a large language model), then there should either be a way to access this model for reproducing the results or a way to reproduce the model (e.g., with an open-source dataset or instructions for how to construct the dataset).
            \item We recognize that reproducibility may be tricky in some cases, in which case authors are welcome to describe the particular way they provide for reproducibility. In the case of closed-source models, it may be that access to the model is limited in some way (e.g., to registered users), but it should be possible for other researchers to have some path to reproducing or verifying the results.
        \end{enumerate}
    \end{itemize}

\item {\bf Open access to data and code}
    \item[] Question: Does the paper provide open access to the data and code, with sufficient instructions to faithfully reproduce the main experimental results, as described in supplemental material?
    \item[] Answer: \answerYes{}
    \item[] Justification:
All datasets used in our experiments are publicly available, as are the pretrained language models we build upon (OPT~\citep{zhang2022opt} and Qwen2.5~\citep{qwen2024qwen25}). We intend to release code with instructions to support the reproduction of our main experiments upon publication.
    \item[] Guidelines:
    \begin{itemize}
        \item The answer NA means that paper does not include experiments requiring code.
        \item Please see the NeurIPS code and data submission guidelines (\url{https://nips.cc/public/guides/CodeSubmissionPolicy}) for more details.
        \item While we encourage the release of code and data, we understand that this might not be possible, so “No” is an acceptable answer. Papers cannot be rejected simply for not including code, unless this is central to the contribution (e.g., for a new open-source benchmark).
        \item The instructions should contain the exact command and environment needed to run to reproduce the results. See the NeurIPS code and data submission guidelines (\url{https://nips.cc/public/guides/CodeSubmissionPolicy}) for more details.
        \item The authors should provide instructions on data access and preparation, including how to access the raw data, preprocessed data, intermediate data, and generated data, etc.
        \item The authors should provide scripts to reproduce all experimental results for the new proposed method and baselines. If only a subset of experiments are reproducible, they should state which ones are omitted from the script and why.
        \item At submission time, to preserve anonymity, the authors should release anonymized versions (if applicable).
        \item Providing as much information as possible in supplemental material (appended to the paper) is recommended, but including URLs to data and code is permitted.
    \end{itemize}

\item {\bf Experimental Setting/Details}
    \item[] Question: Does the paper specify all the training and test details (e.g., data splits, hyperparameters, how they were chosen, type of optimizer, etc.) necessary to understand the results?
    \item[] Answer: \answerYes{}
    \item[] Justification: All these details are provided in Appendix~\ref{app:eval}.
    \item[] Guidelines:
    \begin{itemize}
        \item The answer NA means that the paper does not include experiments.
        \item The experimental setting should be presented in the core of the paper to a level of detail that is necessary to appreciate the results and make sense of them.
        \item The full details can be provided either with the code, in appendix, or as supplemental material.
    \end{itemize}

\item {\bf Experiment Statistical Significance}
    \item[] Question: Does the paper report error bars suitably and correctly defined or other appropriate information about the statistical significance of the experiments?
    \item[] Answer: \answerYes{} %
    \item[] Justification: All plots in the paper include error bars reflecting \SI{95}{\percent} confidence intervals across multiple random initializations, computed as $1.96 \times \text{standard error}$. Table~\ref{tab:textclassacc} omits error measures for clarity, but the results corresponding to the DCE and TC-$\lambda$ rows are also shown with error bars in Figure~\ref{fig:multimetrics}.
    \item[] Guidelines:
    \begin{itemize}
        \item The answer NA means that the paper does not include experiments.
        \item The authors should answer "Yes" if the results are accompanied by error bars, confidence intervals, or statistical significance tests, at least for the experiments that support the main claims of the paper.
        \item The factors of variability that the error bars are capturing should be clearly stated (for example, train/test split, initialization, random drawing of some parameter, or overall run with given experimental conditions).
        \item The method for calculating the error bars should be explained (closed form formula, call to a library function, bootstrap, etc.)
        \item The assumptions made should be given (e.g., Normally distributed errors).
        \item It should be clear whether the error bar is the standard deviation or the standard error of the mean.
        \item It is OK to report 1-sigma error bars, but one should state it. The authors should preferably report a 2-sigma error bar than state that they have a 96\% CI, if the hypothesis of Normality of errors is not verified.
        \item For asymmetric distributions, the authors should be careful not to show in tables or figures symmetric error bars that would yield results that are out of range (e.g. negative error rates).
        \item If error bars are reported in tables or plots, The authors should explain in the text how they were calculated and reference the corresponding figures or tables in the text.
    \end{itemize}

\item {\bf Experiments Compute Resources}
    \item[] Question: For each experiment, does the paper provide sufficient information on the computer resources (type of compute workers, memory, time of execution) needed to reproduce the experiments?
    \item[] Answer: \answerYes{}
    \item[] Justification: We provide this information in Appendix~\ref{app:eval}.
    \item[] Guidelines:
    \begin{itemize}
        \item The answer NA means that the paper does not include experiments.
        \item The paper should indicate the type of compute workers CPU or GPU, internal cluster, or cloud provider, including relevant memory and storage.
        \item The paper should provide the amount of compute required for each of the individual experimental runs as well as estimate the total compute. 
        \item The paper should disclose whether the full research project required more compute than the experiments reported in the paper (e.g., preliminary or failed experiments that didn't make it into the paper). 
    \end{itemize}
    
\item {\bf Code Of Ethics}
    \item[] Question: Does the research conducted in the paper conform, in every respect, with the NeurIPS Code of Ethics \url{https://neurips.cc/public/EthicsGuidelines}?
    \item[] Answer: \answerYes{}
    \item[] Justification: We have reviewed the NeurIPS code of ethics, and we believe our work conforms with it.
    \item[] Guidelines:
    \begin{itemize}
        \item The answer NA means that the authors have not reviewed the NeurIPS Code of Ethics.
        \item If the authors answer No, they should explain the special circumstances that require a deviation from the Code of Ethics.
        \item The authors should make sure to preserve anonymity (e.g., if there is a special consideration due to laws or regulations in their jurisdiction).
    \end{itemize}

\item {\bf Broader Impacts}
    \item[] Question: Does the paper discuss both potential positive societal impacts and negative societal impacts of the work performed?
    \item[] Answer: \answerNo{} %
    \item[] Justification: Our paper proposes a general-purpose training method for improving incremental sequence classification, without targeting any specific application domain. While we acknowledge that downstream applications of such models may carry societal implications, we do not identify any impacts (positive or negative) that are specific to the contributions of this work.
    \item[] Guidelines:
    \begin{itemize}
        \item The answer NA means that there is no societal impact of the work performed.
        \item If the authors answer NA or No, they should explain why their work has no societal impact or why the paper does not address societal impact.
        \item Examples of negative societal impacts include potential malicious or unintended uses (e.g., disinformation, generating fake profiles, surveillance), fairness considerations (e.g., deployment of technologies that could make decisions that unfairly impact specific groups), privacy considerations, and security considerations.
        \item The conference expects that many papers will be foundational research and not tied to particular applications, let alone deployments. However, if there is a direct path to any negative applications, the authors should point it out. For example, it is legitimate to point out that an improvement in the quality of generative models could be used to generate deepfakes for disinformation. On the other hand, it is not needed to point out that a generic algorithm for optimizing neural networks could enable people to train models that generate Deepfakes faster.
        \item The authors should consider possible harms that could arise when the technology is being used as intended and functioning correctly, harms that could arise when the technology is being used as intended but gives incorrect results, and harms following from (intentional or unintentional) misuse of the technology.
        \item If there are negative societal impacts, the authors could also discuss possible mitigation strategies (e.g., gated release of models, providing defenses in addition to attacks, mechanisms for monitoring misuse, mechanisms to monitor how a system learns from feedback over time, improving the efficiency and accessibility of ML).
    \end{itemize}
    
\item {\bf Safeguards}
    \item[] Question: Does the paper describe safeguards that have been put in place for responsible release of data or models that have a high risk for misuse (e.g., pretrained language models, image generators, or scraped datasets)?
    \item[] Answer: \answerNA{}
    \item[] Justification: This work does not involve the release of models or datasets that present a high risk of misuse. As such, no specific safeguards have been put in place.
    \item[] Guidelines:
    \begin{itemize}
        \item The answer NA means that the paper poses no such risks.
        \item Released models that have a high risk for misuse or dual-use should be released with necessary safeguards to allow for controlled use of the model, for example by requiring that users adhere to usage guidelines or restrictions to access the model or implementing safety filters. 
        \item Datasets that have been scraped from the Internet could pose safety risks. The authors should describe how they avoided releasing unsafe images.
        \item We recognize that providing effective safeguards is challenging, and many papers do not require this, but we encourage authors to take this into account and make a best faith effort.
    \end{itemize}

\item {\bf Licenses for existing assets}
    \item[] Question: Are the creators or original owners of assets (e.g., code, data, models), used in the paper, properly credited and are the license and terms of use explicitly mentioned and properly respected?
    \item[] Answer: \answerYes{} %
    \item[] Justification: We use publicly available pretrained models and benchmark datasets in our experiments. We have cited the original sources in the paper and have made a reasonable effort to ensure that all assets are used in accordance with their licenses and terms of use.
    \item[] Guidelines:
    \begin{itemize}
        \item The answer NA means that the paper does not use existing assets.
        \item The authors should cite the original paper that produced the code package or dataset.
        \item The authors should state which version of the asset is used and, if possible, include a URL.
        \item The name of the license (e.g., CC-BY 4.0) should be included for each asset.
        \item For scraped data from a particular source (e.g., website), the copyright and terms of service of that source should be provided.
        \item If assets are released, the license, copyright information, and terms of use in the package should be provided. For popular datasets, \url{paperswithcode.com/datasets} has curated licenses for some datasets. Their licensing guide can help determine the license of a dataset.
        \item For existing datasets that are re-packaged, both the original license and the license of the derived asset (if it has changed) should be provided.
        \item If this information is not available online, the authors are encouraged to reach out to the asset's creators.
    \end{itemize}

\item {\bf New Assets}
    \item[] Question: Are new assets introduced in the paper well documented and is the documentation provided alongside the assets?
    \item[] Answer: \answerNA{} %
    \item[] Justification: The paper does not introduce any new assets directly. We plan to release code to support reproducibility and will ensure it is adequately documented.
    \item[] Guidelines:
    \begin{itemize}
        \item The answer NA means that the paper does not release new assets.
        \item Researchers should communicate the details of the dataset/code/model as part of their submissions via structured templates. This includes details about training, license, limitations, etc. 
        \item The paper should discuss whether and how consent was obtained from people whose asset is used.
        \item At submission time, remember to anonymize your assets (if applicable). You can either create an anonymized URL or include an anonymized zip file.
    \end{itemize}

\item {\bf Crowdsourcing and Research with Human Subjects}
    \item[] Question: For crowdsourcing experiments and research with human subjects, does the paper include the full text of instructions given to participants and screenshots, if applicable, as well as details about compensation (if any)? 
    \item[] Answer: \answerNA{} %
    \item[] Justification: The paper does not involve crowdsourcing nor research with human subjects.
    \item[] Guidelines:
    \begin{itemize}
        \item The answer NA means that the paper does not involve crowdsourcing nor research with human subjects.
        \item Including this information in the supplemental material is fine, but if the main contribution of the paper involves human subjects, then as much detail as possible should be included in the main paper. 
        \item According to the NeurIPS Code of Ethics, workers involved in data collection, curation, or other labor should be paid at least the minimum wage in the country of the data collector. 
    \end{itemize}

\item {\bf Institutional Review Board (IRB) Approvals or Equivalent for Research with Human Subjects}
    \item[] Question: Does the paper describe potential risks incurred by study participants, whether such risks were disclosed to the subjects, and whether Institutional Review Board (IRB) approvals (or an equivalent approval/review based on the requirements of your country or institution) were obtained?
    \item[] Answer: \answerNA{} %
    \item[] Justification: The paper does not involve crowdsourcing nor research with human subjects.
    \item[] Guidelines:
    \begin{itemize}
        \item The answer NA means that the paper does not involve crowdsourcing nor research with human subjects.
        \item Depending on the country in which research is conducted, IRB approval (or equivalent) may be required for any human subjects research. If you obtained IRB approval, you should clearly state this in the paper. 
        \item We recognize that the procedures for this may vary significantly between institutions and locations, and we expect authors to adhere to the NeurIPS Code of Ethics and the guidelines for their institution. 
        \item For initial submissions, do not include any information that would break anonymity (if applicable), such as the institution conducting the review.
    \end{itemize}

\end{enumerate}

\clearpage
\appendix
\section{Additional methodological details \& proofs}
\label{app:theory}

In this appendix, we provide additional details relating to the material presented in Sections~\ref{sec:method} and~\ref{sec:theory}.

\subsection{Generalized temporal-consistency condition}
\label{app:consistency}

The temporal-consistency condition~\eqref{eq:consistency} can be extended to $k$-step transitions as follows.
Slightly abusing notation and setting $s_{T+1} \equiv \cdots \equiv s_{T+k} \equiv y$, and $p(y \mid s_t) = \mathbf{1}_{\{y = s_t\}}$ for $t > T$, we have
\begin{align*}
p(y \mid s_t) = \mathbf{E}_{p(s_{t+k} \mid s_t)}[p(y \mid s_{t + k})],
\end{align*}
for any $y$, any $t$ and any $k$.
This follows from the Markov properties~\eqref{eq:markovprop}.

\subsection{Theoretical results}
\label{app:proofs}

This sections provides proofs for Propositions \ref{prop:convergence}, \ref{prop:tcequiv}, and \ref{prop:cheikhi}.
It also introduces an additional result, Proposition~\ref{prop:dceequiv}, that is the analogue of Proposition~\ref{prop:tcequiv} for DCE.

For an $N$-dimensional vector $\bm{x}$, we denote the $L_1$-norm as $\lVert \bm{x} \rVert_1 = \sum_{n = 1}^N \lvert x_i \rvert$.
Similarly, for an $N \times M$ matrix $\bm{X}$, we denote the induced $\infty$-norm as $\lVert \bm{X} \rVert_{\infty} = \max_{n = 1}^N \lVert \bm{x}_n \rVert_1$, where $\bm{x}_n$ is the $n$th row of $\bm{X}$.

\begin{proof}[Proof of Proposition~\ref{prop:convergence}]
Denote by $\bm{a}^t_m$ the $m$th row of the matrix $\bm{Q}^t \bm{R}$.
By assumption, we know that there is a $\tau \in \mathbf{N}_{\ge 0}$ and a $\varepsilon > 0$ such that $\lVert \bm{a}^\tau_m \rVert_1 \ge \varepsilon$ for all $m$.
Let $F: \mathbf{R}^{M \times K} \to \mathbf{R}^{M \times K}$ be the linear operator defined by $F(\bm{P}) = \bm{Q} \bm{P} + \bm{R}$.
We will show that $F^{\tau + 1}$ is a contraction mapping in the induced $\infty$-norm, that is,
\begin{align*}
\lVert F^{\tau + 1}(\bm{P}) - F^{\tau + 1}(\bm{P}') \rVert_{\infty}
    \le (1 - \varepsilon) \lVert \bm{P} - \bm{P}' \rVert_{\infty},
\end{align*}
for any two $M \times K$ row-stochastic matrices $\bm{P}, \bm{P}'$.
By the Banach fixed-point theorem, it follows that $F$ admits a unique fixed point $\bm{P}^\star = \lim_{t \to \infty} F^t(\bm{P}_0)$ for any initial $\bm{P}_0$.

By construction, the matrix $\bm{U} \doteq \begin{bmatrix}\bm{Q} & \bm{R}\end{bmatrix}$ is row-stochastic, and thus $\lVert \bm{U} \rVert_{\infty} = 1$ and $\lVert \bm{Q} \rVert_{\infty} \le 1$.
It follows that $\bm{Q}^\tau \bm{U} = \begin{bmatrix}\bm{Q}^{\tau + 1} & \bm{Q}^\tau \bm{R}\end{bmatrix}$ is such that $\lVert \bm{Q}^\tau \bm{U} \rVert \le \lVert \bm{Q} \rVert_{\infty}^\tau \lVert \bm{U} \rVert_\infty \le 1$.
Since every row of $\bm{Q}^{\tau} \bm{R}$ has $L_1$-norm at least $\varepsilon$, it must be that $\lVert \bm{Q}^{\tau + 1} \rVert_\infty \le 1 - \varepsilon$.
It follows that
\begin{align*}
\lVert F^{\tau + 1}(\bm{P}) - F^{\tau + 1}(\bm{P}') \rVert_{\infty}
    = \lVert \bm{Q}^{\tau + 1} (\bm{P} - \bm{P}') \rVert_{\infty}
    &\le \lVert \bm{Q}^{\tau + 1}\rVert_{\infty} \lVert \bm{P} - \bm{P}' \rVert_{\infty} \\
    &\le (1 - \varepsilon) \lVert \bm{P} - \bm{P}' \rVert_{\infty}. \qedhere
\end{align*}
\end{proof}

\begin{proof}[Proof of Proposition~\ref{prop:tcequiv}]
Consider one step of the TC optimization problem~\eqref{eq:tcoptim}, where we denote the tabular model by using the $M \times K$ matrix $\bm{\Theta} = [\bm{\theta}_m]$.
Letting $c_m = \sum_{(s, s') \in \mathcal{T}} \mathbf{1}_{\{s = m\}}$, we have
\begin{align*}
\bm{\Theta}^{(i + 1)}
    &\in \Argmin_{\bm{\Theta}} \sum_n \ell_{\text{TC}}(\bm{\Theta}, \bm{\Theta}^{(i)}, \bm{s}_n, y_n) \\
    &= \Argmin_{\bm{\Theta}} \sum_{(s, y) \in \mathcal{B}} H[ \bm{\delta}_y \Vert \bm{\theta}_s ]
        + \sum_{(s, s') \in \mathcal{A}} H[ \bm{\theta}^{(i)}_{s'} \Vert \bm{\theta}_s ] \\
    &= \Argmin_{\bm{\Theta}} \ \sum_{m} c_m \left\{
        \sum_k \hat{r}_{mk} H[ \bm{\delta}_k \Vert \bm{\theta}_m ]
        + \sum_{m'} \hat{q}_{mm'} H[ \bm{\theta}^{(i)}_{m'} \Vert \bm{\theta}_m ] \right \} \\
    &= \Argmin_{\bm{\Theta}} \sum_{m} c_m
        H[ \hat{\bm{r}}_m + \hat{\bm{q}}_m^\Tr \bm{\Theta}^{(i)} \Vert \bm{\theta}_m ],
\end{align*}
where $\hat{\bm{r}}_m = [\hat{r}_{mk}] \in \mathbf{R}^K$, and $\hat{\bm{q}}_m = [\hat{q}_{mm'}] \in \mathbf{R}^M$, and where the last equality uses the linearity of the cross-entropy with respect to the target distribution.
It follows that the cross-entropy is minimized if $\bm{\Theta} = \hat{\bm{Q}} \bm{\Theta}^{(i)} + \hat{\bm{R}}$, which corresponds exactly to~\eqref{eq:tabiter}.
\end{proof}

\begin{proof}[Proof of Proposition~\ref{prop:cheikhi}]
The Markov chain of Figure~\ref{fig:synthetic} has exactly two absorbing states, and the absorption probabilities sum up to $1$.
As such, we can focus on the variance of the estimator for $p^\star_{m1}$.
We will construct a Markov reward process whose state-value function $V(m)$ is equivalent to the absorption probability $p^\star_{m1}$.
To this end, we collapse the two absorbing states $0$ and $1$ into a single terminal state denoted by $\varnothing$.
We instantiate the reward distribution as follows.
For any transition between two transient states, the reward is always $0$.
For any transition between a state at layer $T$ and the absorbing state $\varnothing$, the reward is $1$ with probability $1/2$ and zero otherwise.

We can now recast our problem by using the terminology of \citet[Sec. 7]{cheikhi2023statistical}.
For every transient state $m$ we have
\begin{align*}
V(m) &= p^\star_{m1},
& V^{\text{MC}}(m) &= \hat{p}^{\text{dir}}_{m1},
& V^{\text{TD}}(m) &= \hat{p}^{\text{ind}}_{m1}.
\end{align*}
We focus on states in the first and second layer.
Given the sampling process we have defined in the main text, any given state in the second layer will appear in $N / W$ trajectories in expectation, and the transition between any pair of first and second layer states will appear in $N / W^2$ trajectories in expectation.
It follows that, for any state $m$ in the first layer and any state $m'$ in the second layer, the coupling coefficient and the inverse trajectory pooling coefficient are given by
\begin{align*}
C(m, m') &= 1 / W,
& C(m) &= 1/W,
\end{align*}
respectively.
We can then apply Theorem~7.2 in \citet{cheikhi2023statistical} to obtain the desired result.
\end{proof}

The next proposition relates the DCE loss~\eqref{eq:dceloss} and the optimization problem~\eqref{eq:dceoptim}, introduced in Section~\ref{sec:method}, to the direct estimator $\hat{\bm{P}}^{\text{dir}}$ presented in Section~\ref{sec:theory}.

\begin{proposition}
\label{prop:dceequiv}
Let $p_{\bm{\theta}}(y = k \mid s_t = m) \doteq \theta_{mk}$.
Then, $\hat{\bm{P}}^{\textup{dir}}$ is a solution of the DCE optimization problem~\eqref{eq:dceoptim}.
\end{proposition}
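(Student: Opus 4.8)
The plan is to follow the same template as the proof of Proposition~\ref{prop:tcequiv}, exploiting the fact that the tabular parametrization makes the objective decouple across states. Writing the tabular model as the $M \times K$ matrix $\bm{\Theta} = [\bm{\theta}_m]$, I would first rewrite the aggregate DCE objective~\eqref{eq:dceoptim} as a single sum over the pooled set $\mathcal{D}'$, namely $\sum_n \ell_{\text{DCE}}(\bm{\Theta}; \bm{s}_n, y_n) = \sum_{(s,y) \in \mathcal{D}'} H[\bm{\delta}_y \Vert \bm{\theta}_s]$. This is immediate from~\eqref{eq:dceloss}, since each summand $-\log p_{\bm{\theta}}(y \mid s_t)$ corresponds to exactly one pair $(s_t, y)$ in $\mathcal{D}' = \cup_{(\bm{s},y)} \cup_{t \le T(\bm{s})} \{(s_t, y)\}$.

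Next I would group these terms by transient state. Letting $c'_m$ denote the number of pairs in $\mathcal{D}'$ with $s = m$, the objective becomes $\sum_m c'_m \sum_k \hat{p}^{\text{dir}}_{mk} H[\bm{\delta}_k \Vert \bm{\theta}_m]$, where the inner weights $\hat{p}^{\text{dir}}_{mk}$ are precisely the empirical fractions defining $\hat{\bm{P}}^{\text{dir}}$. Invoking the linearity of the cross-entropy in its first (target) argument—the same identity used in the last step of the TC proof—this collapses to $\sum_m c'_m H[\hat{\bm{p}}^{\text{dir}}_m \Vert \bm{\theta}_m]$, where $\hat{\bm{p}}^{\text{dir}}_m$ is the $m$th row of $\hat{\bm{P}}^{\text{dir}}$.

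The final step is to observe that this reformulated objective fully decouples: each row $\bm{\theta}_m$ enters exactly one summand, and the weights $c'_m \ge 0$ are fixed constants independent of $\bm{\Theta}$. Minimizing $H[\hat{\bm{p}}^{\text{dir}}_m \Vert \bm{\theta}_m]$ over the probability simplex is achieved, by Gibbs' inequality (equivalently, the nonnegativity of the KL divergence), exactly at $\bm{\theta}_m = \hat{\bm{p}}^{\text{dir}}_m$. Hence $\bm{\Theta} = \hat{\bm{P}}^{\text{dir}}$ minimizes the aggregate DCE loss, which is the claim.

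I do not anticipate a genuine obstacle here; the result is essentially the classical fact that cross-entropy minimization against one-hot labels recovers empirical class frequencies, and the tabular assumption removes any coupling between states. The only point requiring mild care is the bookkeeping of $\mathcal{D}'$—checking that the multiplicity with which each pair $(m, k)$ enters the sum matches the counts underlying the definition of $\hat{p}^{\text{dir}}_{mk}$, and restricting attention to states with $c'_m > 0$ (for unvisited states $\bm{\theta}_m$ is unconstrained and the claim is vacuous). Both points follow directly from how $\mathcal{D}'$ is constructed as the union over all prefixes of all trajectories.
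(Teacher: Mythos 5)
Your proposal is correct and follows essentially the same route as the paper's proof: pooling the DCE loss over $\mathcal{D}'$, grouping terms by transient state with counts $c_m$, invoking the linearity of the cross-entropy in its target argument to collapse to $\sum_m c_m H[\hat{\bm{p}}^{\text{dir}}_m \Vert \bm{\theta}_m]$, and minimizing row by row. The only differences are that you make explicit two points the paper leaves implicit---the Gibbs'-inequality justification for the per-row minimizer and the caveats about multiplicity in $\mathcal{D}'$ and unvisited states---which is sound added care, not a different argument.
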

\begin{proof}
Denote the tabular model by using the $M \times K$ matrix $\bm{\Theta} = [\bm{\theta}_m]$.
Letting $c_m = \sum_{(s, y') \in \mathcal{D}'} \mathbf{1}_{\{s = m\}}$, we have
\begin{align*}
\bm{\Theta}^\star_{\text{DCE}}
    &\in \Argmin_{\bm{\Theta}} \sum_n \ell_{\text{DCE}}(\bm{\Theta}, \bm{s}_n, y_n) \\
    &= \Argmin_{\bm{\Theta}} \sum_{(s, y) \in \mathcal{D}'} H[ \bm{\delta}_y \Vert \bm{\theta}_s ] \\
    &= \Argmin_{\bm{\Theta}} \ \sum_{m} c_m \left\{
        \sum_k \hat{p}^{\text{dir}}_{mk} H[ \bm{\delta}_k \Vert \bm{\theta}_m ] \right \} \\
    &= \Argmin_{\bm{\Theta}} \sum_{m} c_m H[ \hat{\bm{p}}^{\text{dir}}_m \Vert \bm{\theta}_m ],
\end{align*}
where $\hat{\bm{p}}^{\text{dir}}_m = [\hat{p}^{\text{dir}}_{mk}] \in \mathbf{R}^K$, and where the last equality uses the linearity of the cross-entropy with respect to the target distribution.
It follows that the cross-entropy is minimized if $\bm{\Theta} = \hat{\bm{P}}^{\text{dir}}$.
\end{proof}

\section{Additional details on experimental evaluation}
\label{app:eval}

This section provides additional details on the experiments presented in the main paper.
In Section~\ref{app:training}, we describe the precise procedure we employ to train and select models.
In Section~\ref{app:textclass}, we provide more details on the text classificiation experiments of Section~\ref{sec:textclass}.
In Section~\ref{app:verification}, we provide more details on the verification experiments of Section~\ref{sec:verification}.

\subsection{Training, model selection \& metrics}
\label{app:training}

Algorithm~\ref{alg:tclambda} presents one step of the training loop for the TC-$\lambda$ and DCE approaches.
Note that DCE is obtained simply by setting $\lambda = 1$, as explained in Section~\ref{sec:tcestimators}.
With respect to the iterative optimization procedure~\eqref{eq:tcoptim}, we make two minor practical adjustments.
First, we update the parameters using stochastic gradient updates.
Second, we average the loss over all prefixes of each sequence, instead of summing them.
This means that every sequence contributes to the loss equally, irrespective of its length.

\begin{algorithm}[t]
  \caption{TC-$\lambda$ incremental classifier: single training step}
  \label{alg:tclambda}
  \begin{algorithmic}[1]
    \Require minibatch $\mathcal{B}$, parameters $\bm{\theta}$, temporal-consistency parameter $\lambda$, learning rate $\eta$
    \State $\ell(\bm{\theta}) \gets 0$
    \Comment{Initialize the loss function.}
    \For{$(\bm{x}, y) \in \mathcal{B}$}
      \Comment{Iterate over the minibatch.}
      \State $T \gets \text{length}(\bm{x})$
      \State $\bm{z}_T \gets \bm{\delta}_y$
      \For{$t = T - 1, \ldots, 1$}
        \State $\bm{z}_t \gets \lambda \bm{z}_{t + 1} + (1 - \lambda) \bm{p}_{\bm{\theta}}(\cdot \mid \bm{x}_{\le t + 1})$
      \EndFor
      \State $\ell(\bm{\theta}) \gets \ell(\bm{\theta})
        + \frac{1}{T} \sum_{t = 1}^T H[\text{stopgrad}(\bm{z}_t) \Vert \bm{p}_{\bm{\theta}}(\cdot \mid \bm{x}_{\le t})]$
    \EndFor
    \State \Return $\bm{\theta} - \frac{\eta}{\lvert \mathcal{B} \rvert} \bm{\nabla}_{\bm{\theta}} \ell(\bm{\theta})$
    \Comment{Update the parameters.}
  \end{algorithmic}
\end{algorithm}

We run our experiments on an \texttt{a3-highgpu-8g} instance on Google Cloud, with \num{208} vCPUs, \num{1872} GB of memory, and \num{8} NVIDIA H100 GPUs.
We ensure that every experiment runs on a single GPU.
For every dataset, we set the batch size to maximize GPU utilization.
We use the AdamW optimizer~\citep{loshchilov2019decoupled} with a dynamic learning rate.
The learning rate starts at zero, increases linearly over a warmup period, then decreases linearly to zero at the end of the optimization process.
We vary the following hyperparameters:
\begin{itemize}
\item the number of training epochs,
\item the maximum learning rate,
\item the warmup period (as a ratio of the total number of training steps),
\item the weight decay, and
\item the temporal-consistency parameter $\lambda$ (for TC-$\lambda$ only).
\end{itemize}
We run experiments on a grid of hyperparameter configurations, and we select the configuration that maximizes the full-sequence predictive accuracy on a small dataset of held-out sequences.
Finally, throughout section~\ref{sec:eval}, we report the mean and standard deviation of the performance of the winning hyperparameter configuration on the full test set, across 10 training runs with different random seeds.

We consider three metrics to measure the quality of a model $p_{\bm{\theta}}(y \mid \bm{x})$ on held-out data.
The average accuracy is the fraction of examples where $\Argmax_y p_{\bm{\theta}}(y \mid \bm{x})$ matches the ground-truth label $y^\star$ (higher is better).
The average negative log-likelihood (NLL) is the empirical average of $-\log p_{\bm{\theta}}(y^\star \mid \bm{x})$ (lower is better).
The area under the ROC curve (ROC AUC) captures the model's ability to discriminate between a random positive and negative example;
In the multiclass setting, we use the one-vs-rest macro-average version.
A higher ROC AUC is better.

\subsection{Text classification datasets}
\label{app:textclass}

We evaluate models on the following four well-known text classification benchmarks.
Summary statistics and licensing terms for each dataset are provided in Table~\ref{tab:summarystats}.
\begin{description}
\item[\textsc{ohsumed}]
The dataset contains abstracts of publications in medical journals~\citep{moschitti2004complex}.
The task is to categorize each abstract into one of 23 themes, corresponding to sub-categories of cardiovascular diseases.
We use the archive \texttt{ohsumed-all-docs.tar.gz} available at \url{https://disi.unitn.it/moschitti/corpora.htm}.

\item[\textsc{newsgroups}]
The dataset contains newsgroup documents from 20 different newsgroups~\citep{lang1995newsweeder}.
The task is to identify which newsgroup the document comes from.
We use the version of the dataset hosted at \url{https://huggingface.co/datasets/google-research-datasets/newsgroup}.

\item[\textsc{imdb}]
The dataset contains movie reviews from the IMDb website~\citep{maas2011learning}.
The task consists of identifying the sentiment of the review (positive or negative).
We use the version of the dataset hosted at \url{https://huggingface.co/datasets/stanfordnlp/imdb}.

\item[\textsc{ag-news}]
The dataset contains short news articles~\citep{delcorso2005ranking}.
The task is to identify the topic of each article.
We use the version of the dataset hosted at \url{https://huggingface.co/datasets/fancyzhx/ag_news}.
\end{description}

\begin{table}[t]
  \caption{%
Summary statistics for the text classification datasets.
Statistics on the sequence length assume that the text is tokenized with the GPT-2 tokenizer~\citep{radford2019language}.}
  \label{tab:summarystats}
  \centering
  \sisetup{%
  table-alignment=right, %
  mode=text, %
}
\begin{tabular}{
    l
    l
    S[table-format=6.0]
    S[table-format=5.0]
    S[table-format=2.0]
    S[table-format=3.0]
    S[table-format=3.0]
    S[table-format=4.0]}
  \toprule
    & & & & \multicolumn{3}{c}{Seq. length percentiles} \\
    \cmidrule(lr){6-8}
    Dataset
    & License
    & $N_{\text{train}}$
    & $N_{\text{test}}$
    & $K$
    & {$50$th}
    & {$90$th}
    & {$99$th}
    \\
  \midrule
  \textsc{ohsumed}    & CC BY-NC 4.0 &  11520 &  6782 & 23 & 270 & 430 &  604 \\
  \textsc{newsgroups} & unknown      &  11314 &  7532 & 20 & 373 & 935 & 4226 \\
  \textsc{imdb}       & unknown      &  25000 & 25000 &  2 & 221 & 584 & 1159 \\
  \textsc{ag-news}    & unknown      & 120000 &  7600 &  4 &  51 &  70 &  122 \\
  \bottomrule
\end{tabular}

\end{table}

The \textsc{newsgroups}, \textsc{imdb} and \textsc{ag-news} datasets are provided with separate train and test splits, which we reuse as-is.
For \textsc{ohsumed}, we create our own train and test splits, by partitioning the data uniformly at random.

We fine-tune pre-trained models from the OPT family~\citep{zhang2022opt}, which are made publicly available under the OPT-175B license\footnote{See: \url{https://github.com/facebookresearch/metaseq/blob/main/projects/OPT/MODEL_LICENSE.md}.}.
Table~\ref{tab:optparams} provides the hyperparameter configurations for the fine-tuned OPT-125M models whose performance we report in Table~\ref{tab:textclassacc} in the main text.
These hyperparameters are found using the process outlined in Section~\ref{app:training}.

\begin{table}[t]
  \footnotesize
  \caption{%
Hyperparameters used to fine-tune the OPT-125M models reported in the paper.}
  \label{tab:optparams}
  \centering
  \sisetup{%
  table-alignment=right, %
  mode=text, %
}
\begin{tabular}{
    l
    l
    S[table-format=3.0]
    S[table-format=1.0]
    S
    S[table-format=1.2]
    S
    S[table-format=1.2]}
  \toprule
    {Dataset}
    & {Model}
    & {Batch size}
    & {Epochs}
    & {Max. LR}
    & {Warmup}
    & {Weight decay}
    & {$\lambda$}
    \\
  \midrule
  ohsumed    & Filtering                   &  32 & 3 & 2e-4 & 0.03 & 1e-3 & \textemdash \\
             & Specialist, 4               &  64 & 2 & 5e-4 & 0.10 & 1e-4 & \textemdash \\
             & Specialist, 16              &  64 & 2 & 5e-4 & 0.10 & 1e-3 & \textemdash \\
             & Last token                  &  64 & 4 & 1e-4 & 0.10 & 1e-4 & \textemdash \\
             & DCE                         & 128 & 2 & 2e-4 & 0.10 & 1e-4 & \textemdash \\
             & TC-$\lambda$                & 128 & 4 & 1e-4 & 0.10 & 1e-3 & 0.95   \\
             & $\ell_2$ loss, direct       & 128 & 2 & 2e-4 & 0.10 & 1e-4 & \textemdash \\
             & $\ell_2$ loss, TD-$\lambda$ & 128 & 4 & 1e-4 & 0.10 & 1e-3 & 0.95   \\
  \midrule
  newsgroups & Filtering                   &   4 & 4 & 5e-5 & 0.03 &    0 & \textemdash \\
             & Specialist, 4               &  64 & 2 & 2e-5 & 0.10 & 1e-4 & \textemdash \\
             & Specialist, 16              &  64 & 4 & 5e-5 & 0.10 & 1e-2 & \textemdash \\
             & Last token                  &  64 & 4 & 5e-5 & 0.10 & 1e-4 & \textemdash \\
             & DCE                         &  64 & 4 & 1e-4 & 0.03 & 1e-5 & \textemdash \\
             & TC-$\lambda$                &  64 & 4 & 1e-4 & 0.10 & 1e-3 & 0.98 \\
             & $\ell_2$ loss, direct       &  64 & 4 & 1e-4 & 0.03 & 1e-5 & \textemdash \\
             & $\ell_2$ loss, TD-$\lambda$ &  64 & 4 & 1e-4 & 0.10 & 1e-3 & 0.98 \\
  \midrule
  imdb       & Filtering                   &   8 & 2 & 5e-4 & 0.03 & 1e-3 & \textemdash \\
             & Specialist, 4               &   8 & 2 & 5e-5 & 0.10 & 1e-4 & \textemdash \\
             & Specialist, 16              &   8 & 2 & 2e-5 & 0.10 & 1e-2 & \textemdash \\
             & Last token                  &   8 & 2 & 2e-5 & 0.10 & 1e-4 & \textemdash \\
             & DCE                         &   8 & 1 & 2e-5 & 0.10 & 1e-3 & \textemdash \\
             & TC-$\lambda$                &   8 & 2 & 2e-5 & 0.10 & 1e-3 & 0.80 \\
             & $\ell_2$ loss, direct       &   8 & 1 & 2e-5 & 0.10 & 1e-3 & \textemdash \\
             & $\ell_2$ loss, TD-$\lambda$ &   8 & 2 & 2e-5 & 0.10 & 1e-3 & 0.80 \\
  \midrule
  ag-news    & Filtering                   &  64 & 2 & 5e-5 & 0.03 &    0 & \textemdash \\
             & Specialist, 4               &  64 & 2 & 1e-4 & 0.10 & 1e-2 & \textemdash \\
             & Specialist, 16              &  64 & 2 & 1e-4 & 0.10 & 1e-3 & \textemdash \\
             & Last token                  &  64 & 2 & 1e-4 & 0.10 & 1e-4 & \textemdash \\
             & DCE                         & 128 & 2 & 1e-4 & 0.03 & 1e-3 & \textemdash \\
             & TC-$\lambda$                & 128 & 2 & 1e-4 & 0.10 & 1e-2 & 0.90 \\
             & $\ell_2$ loss, direct       & 128 & 2 & 1e-4 & 0.03 & 1e-3 & \textemdash \\
             & $\ell_2$ loss, TD-$\lambda$ & 128 & 2 & 1e-4 & 0.10 & 1e-2 & 0.90 \\
  \bottomrule
\end{tabular}

\end{table}

Figure~\ref{fig:multimetrics} presents detailed results for DCE, TC-$\lambda$, and the corresponding squared-loss variants, Direct $\ell_2$ loss and LSTD($\lambda$), for prefixes of length $2^i, i = 0, \ldots, 9$, across three metrics.
We report the mean and the \SI{95}{\percent} confidence interval of 10 independent seeds, but the standard error is too small to be visible on the plot.
On all datasets, the TC-$\lambda$ models outperform the DCE models on almost all metrics at almost all prefix lengths.

\begin{figure}[t]
  \centering
  \includegraphics{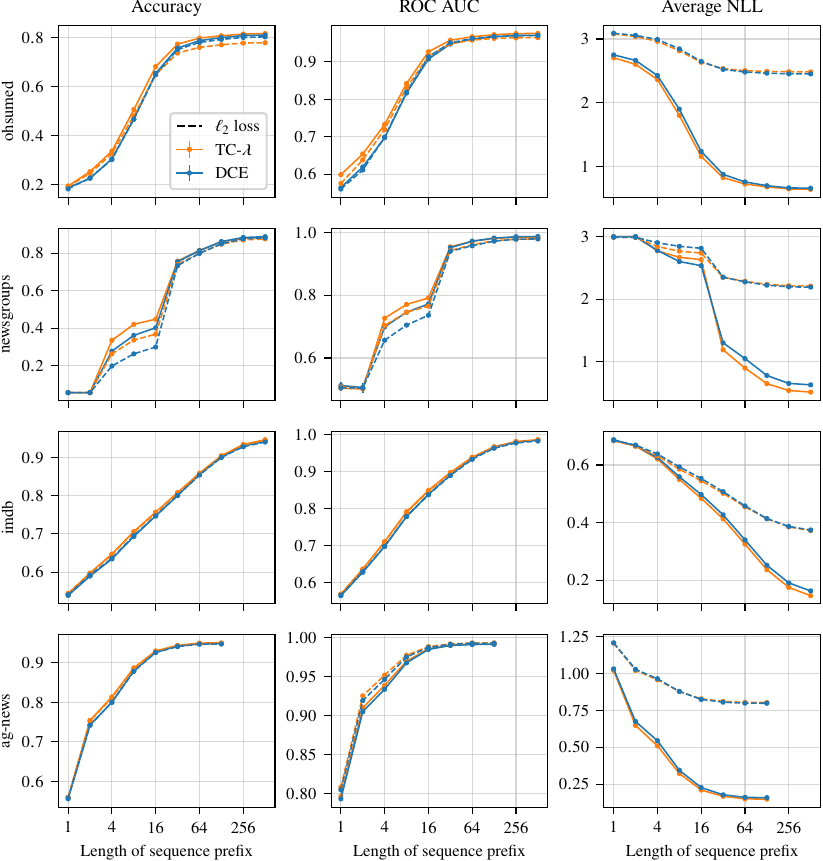}
  \caption{%
Detailed results for OPT-125M models trained with DCE, TC-$\lambda$, and the corresponding squared-loss variants on the text classification datasets.
Accuracy and ROC AUC: higher is better.
Average NLL: lower is better.
We report means and \SI{95}{\percent} confidence intervals over 10 runs (too small to be visible).
}
  \label{fig:multimetrics}
\end{figure}

\paragraph{Compute resources}
Each experiment, consisting of training an evaluating a model, takes \num{5}--\num{90} minutes on a single GPU, depending on the dataset, the size of the model, and the number of epochs.
We estimate that the total compute used for all the experiments performed in the paper, including the hyperparameter sweeps, amounts to approximately \num{2000} GPU hours.

\subsubsection{GPT-4o baseline}

We present the templates used for prompting GPT-4o in Figure~\ref{fig:gpt4o-prompts}.
We use the structured outputs API to ensure that the model always returns exactly one valid class label.
For cost reasons, we sample of a subset of \num{200} examples uniformly at random from the test set for each dataset and each prefix length (\num{4}, \num{16}, and all tokens), and restrict our evaluation to this subset.
Approximately \SI{0.7}{\percent} of requests trigger the ChatGPT filters (mostly in the \textsc{imdb} and \textsc{newsgroups} datasets).
We simply omit the corresponding examples from the evaluation.

\begin{figure}[t]
\centering
\begin{minipage}[t]{0.3\textwidth}
\lstset{
  basicstyle=\ttfamily\small,
  breaklines=true,
  breakindent=0pt,
  frame=single,
  caption=System prompt,
  captionpos=t,
}
\begin{lstlisting}
You are a helpful AI assistant specializing in classifying text. The possible class labels are:

{class_names}

Here are some examples of text with their corresponding class labels:

{examples}
\end{lstlisting}
\end{minipage}
\hfill
\begin{minipage}[t]{0.3\textwidth}
\lstset{
  basicstyle=\ttfamily\small,
  breaklines=true,
  breakindent=0pt,
  frame=single,
  caption=Single example,
  captionpos=t,
}
\begin{lstlisting}
### BEGIN TEXT ###
{text}
### END TEXT ###
label: {label}
\end{lstlisting}
\end{minipage}
\hfill
\begin{minipage}[t]{0.3\textwidth}
\lstset{
  basicstyle=\ttfamily\small,
  breaklines=true,
  breakindent=0pt,
  frame=single,
  caption=User prompt,
  captionpos=t,
}
\begin{lstlisting}
What is the label of the following text?

### BEGIN TEXT ###
{text}
### END TEXT ###
\end{lstlisting}
\end{minipage}

\caption{Templates used for prompting GPT-4o.}
\label{fig:gpt4o-prompts}
\end{figure}

\clearpage
\subsection{Language model verification on GSM8K}
\label{app:verification}

For the language model verification experiments, we use the Qwen2.5-0.5B pre-trained language model~\citep{qwen2024qwen25}, which is publicly available under the Apache 2.0 license.
The GSM8K dataset~\citep{cobbe2021training} is publicly available under the MIT license.
Hyperparameter selection follows the procedure outlined for the text classification experiments in Section~\ref{app:training}, with one small change: instead of selecting the best hyperparameter configuration based on the full-sequence accuracy, we select it based on the full-sequence ROC AUC.
Table~\ref{tab:qwenparams} provides the hyperparameter configurations for the fine-tuned Qwen2.5-0.5B models whose performance we report in Figure~\ref{fig:gsm8k-qwen} in the main text.

\begin{table}[t]
  \footnotesize
  \caption{%
Hyperparameters used to fine-tune the Qwen2.5-0.5B models reported in the paper.}
  \label{tab:qwenparams}
  \centering
  \sisetup{%
  table-alignment=right, %
  mode=text, %
}
\begin{tabular}{
    l
    l
    S[table-format=3.0]
    S[table-format=1.0]
    S
    S[table-format=1.2]
    S
    S[table-format=1.2]}
  \toprule
    {Dataset}
    & {Model}
    & {Batch size}
    & {Epochs}
    & {Max. LR}
    & {Warmup}
    & {Weight decay}
    & {$\lambda$}
    \\
  \midrule
  GSM8K & DCE                  & 48 & 2 & 5e-6 & 0.03 & 1e-2 & \textemdash \\
        & TC-$\lambda$         & 48 & 2 & 5e-6 & 0.10 & 1e-5 & 0.95   \\
        & Direct $\ell_2$ loss & 48 & 2 & 5e-6 & 0.03 & 1e-2 & \textemdash \\
        & LSTD($\lambda$)      & 48 & 2 & 5e-6 & 0.10 & 1e-5 & 0.95   \\
  \bottomrule
\end{tabular}

\end{table}

In these experiments, there is one important conceptual difference with respect to the text classification experiments of Section~\ref{sec:textclass}.
The sequence we seek to classify consists of the \emph{generated} tokens only, but the verifier also needs to access the prompt (i.e., the problem statement).
We implement this by concatenating the generated response to the prompt.
However, when computing the loss, we mask out the terms that correspond to tokens in the prompt.

\paragraph{Predictive NLL}
Figure~\ref{fig:gsm8k-qwen} (left) in the main text shows that models trained with a squared loss achieve essentially the same ROC AUC as those trained with cross-entropy.
Figure~\ref{fig:gsm8k-qwen-nll} further shows that models trained with a cross-entropy loss achieve substantially lower (i.e., better) negative log-likelihood.
Thus, when accurate \& calibrated predictive uncertainties are important, optimizing the cross-entropy objective performs better in practice.

\begin{figure}[t]
  \centering
  \includegraphics{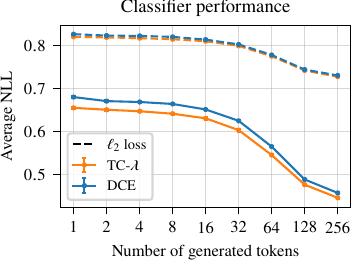}
  \caption{%
Predictive negative log-likelihood of incremental Qwen2.5-0.5B verifiers on GSM8K (mean and \SI{95}{\percent} CI over 10 runs).
}
  \label{fig:gsm8k-qwen-nll}
\end{figure}

\end{document}